\newcommand{\vm}{\bm{m}}
\newcommand{\vt}{\bm{t}}
\newcommand{\vx}{\bm{x}}                       
\newcommand{\vy}{\bm{y}}
\newcommand{\valpha}  {\bm{\alpha}}
\newcommand{\vmu}     {\bm{\mu}}
\newcommand{\msigma}  {\bm{\Sigma}}
\newcommand{\mr}{\bm{R}}       
\newcommand{\ms}{\bm{S}}       
\newcommand{\mU}{\bm{U}}
    \newcommand{\Ac}{\mathcal{A}}
    \newcommand{\Lc}{\mathcal{L}}   
  \newcommand{\Mc}{\mathcal{M}}   
    \newcommand{\Nc}{\mathcal{N}}
\newcommand{\norm}[1]{\|{#1}\|}
\newcommand{\nlsum}{\sum\nolimits}
\newcommand{\set}[1]{\{ #1\}}
\newcommand{\reals}{\mathbb{R}}
\newcommand{\pp}{\mathbb{P}}
\newcommand{\half}{\tfrac{1}{2}}
\newcommand{\xqedhere}[2]{%
  \rlap{\hbox to#1{\hfil\llap{\ensuremath{#2}}}}}
\DeclareMathOperator{\trace}{tr}
\newlength\aftertitskip     \newlength\beforetitskip
\newlength\interauthorskip  \newlength\aftermaketitskip
\def\maketitle{\par
 \begingroup
   \def\thefootnote{\fnsymbol{footnote}}
   \def\@makefnmark{\hbox to 0pt{$^{\@thefnmark}$\hss}}
   \@maketitle \@thanks
 \endgroup
\setcounter{footnote}{0}
 \let\maketitle\relax \let\@maketitle\relax
 \gdef\@thanks{}\gdef\@author{}\gdef\@title{}\let\thanks\relax}
\def\@startauthor{\noindent \normalsize\bf}
\def\@endauthor{}
\def\@starteditor{\noindent \small {\bf Editor:~}}
\def\@endeditor{\normalsize}
\def\@maketitle{\vbox{\hsize\textwidth
 \linewidth\hsize \vskip \beforetitskip
 {\begin{center} \LARGE\@title \par \end{center}} \vskip \aftertitskip
 {\def\and{\unskip\enspace{\rm and}\enspace}%
  \def\addr{\small}%
  \def\email{\hfill\small\sf}%
  \def\name{\normalsize\bf}%
  \def\AND{\@endauthor\rm\hss \vskip \interauthorskip \@startauthor}
  \@startauthor \@author \@endauthor}
}}
\newtheorem{theorem}{Theorem}
\newtheorem{lemma}{Lemma}
\newtheorem{prop}{Proposition}
\numberwithin{equation}{section}
\numberwithin{theorem}{section}
\newcommand{\mitu}{Laboratory for Information and Decision Systems\\ Massachusetts Institute of Technology, Cambridge, MA.}
\newcommand{\teh}{School of ECE, College of Engineering, University of Tehran, Tehran, Iran}
\begin{document}

\title{Manifold Optimization for Gaussian Mixture Models}
\author{\name Reshad Hosseini  \email reshad.hosseini@ut.ac.ir\\
  \addr \teh
  \AND 
  \name Suvrit Sra\email suvrit@mit.edu\\
  \addr \mitu
}

\maketitle

\vskip0.4cm
\hrule
\vskip0.4cm

\pagestyle{fancy}
\thispagestyle{empty}
\begin{abstract}
  We take a new look at parameter estimation for Gaussian Mixture Models (GMMs). In particular, we propose using \emph{Riemannian manifold optimization} as a powerful counterpart to Expectation Maximization (EM). An out-of-the-box invocation of manifold optimization, however, fails spectacularly: it converges to the same solution but vastly slower. Driven by intuition from manifold convexity, we then propose a reparamerization that has remarkable empirical consequences. It makes manifold optimization not only match EM---a highly encouraging result in itself given the poor record nonlinear programming methods have had against EM so far---but also outperform EM in many practical settings, while displaying much less variability in running times. We further highlight the strengths of manifold optimization by developing a somewhat tuned manifold LBFGS method that proves even more competitive and reliable than existing manifold optimization tools.
We hope that our results encourage a wider consideration of manifold optimization for parameter estimation problems.
\end{abstract}

\section{Introduction}
Gaussian Mixture Models (GMMs) are widely used in a variety of areas, including machine learning and signal processing~\citep{dudahart,keener,bishop,murphy12,McLPee00}. A quick search of the literature suggests that for estimating parameters of a GMM the Expectation Maximization (EM) algorithm~\citep{dempster77} is a \emph{de facto} choice. Although other numerical approaches have also been considered~\citep{redWal84}, methods such as conjugate gradients, quasi-Newton, Newton, are typically inferior to EM~\citep{jordan96} in many practical settings.

The main difficulty of applying standard nonlinear programming techniques for GMMs is optimization over covariance matrices. The positive definiteness constraint, although an open subset of Euclidean space, can be difficult to handle, especially for higher-dimensional problems. When approaching the boundary of the constraint set, convergence speed of iterative methods can also get adversely affected.  A partial remedy for these difficulties is to use the Cholesky decomposition, as was also exploited for semidefinite programming in~\citep{burer1999solving}. But as pointed out in~\citep{vanderbei2000formulating}, for general optimization problems (even for semidefinite programs) such a nonconvex decomposition adds many more stationary points and possibly spurious local minima. One can formulate the positive definiteness constraint via a set of smooth convex inequalities~\citep{vanderbei2000formulating} and resort to interior-point methods. It was observed in~\citep{sra2013geometric} that using such sophisticated methods can be extremely slower (on a class of statistical problems) than simpler EM-like fixed point iterations, especially for higher dimensions. 

In this paper we reconsider the above viewpoint and take a new look at nonlinear optimization techniques for GMM parameter estimation, which can not only match EM but often also outdo it. We believe that matching EM's performance on nontrivial GMMs using such numerical methods is already remarkable. Even more interesting are instances where we substantially outperform EM. 

Specifically, we approach GMM parameter estimation via \emph{Riemannian Manifold Optimization}. We turn to manifold optimization motivated by a simple observation: the positive definiteness constraint on covariance matrices poses difficulties to all numerical methods (gradient-descent, conjugate gradients, quasi-Newton, etc.); and one way to ameliorate these difficulties is by operating directly on the manifold of positive definite matrices.\footnote{Equivalently, on the interior of the constraint set, as is done by interior point methods (their nonconvex versions); though these turn out to be slow too as they are second order methods.}. Therewith, one implicitly satisfies the constraints, and can devote greater effort to the maximization of the log-likelihood.

A reader familiar with the simplicity and elegance of EM may question the above motivation. And this skepticism is justified: an out-of-the-box invocation of manifold optimization turns out to be vastly inferior to EM. So, should we discard manifold optimization too? \emph{No.} But we do need to develop a more refined approach; we outline our ideas below.

Intuitively, the mismatch lies in the geometry. Recall that for GMMs, the M-step of EM is a Euclidean convex optimization problem (which even has a closed form solution), whereas the log-likelihood is not manifold convex\footnote{That is, convex along geodesic curves on a manifold.} even for a single Gaussian. This  suggests that it may be fruitful to consider a reparametrization which makes at least the single component log-likelihood manifold convex. This intuition turns out to have remarkable empirical consequences (Fig.~\ref{fig:reparam}), which ultimately enables manifold optimization to compete with EM and often even surpass it.

\textbf{Contributions.} In light of the above background, the main contributions of this paper are as follows:
\begin{list}{--}{\leftmargin=2em}
\setlength{\itemsep}{-1pt}
\item Introduction of manifold optimization as a powerful numerical tool for GMM parameter estimation. Most importantly, we show how a simple reparamerization holds the key to making manifold optimization succeed.
\item Development of a solver based on manifold-LBFGS; our key contribution here is the design and implementation of a powerful line-search procedure. This line-search helps ensure convergence, and beyond that, it helps LBFGS outperform both EM and the usual manifold conjugate gradient (CG) method; our solver may thus also be of independent interest.
\item Experimental evidence on both synthetic and real-data to show a performance comparison between manifold optimization and EM.
\end{list}
As may be gleaned from our results, manifold optimization performs well across a wide range of parameter values and problem sizes, while being much less sensitive to overlapping data than EM, and displaying less variability in running times. These results are encouraging and suggest that manifold optimization could open a new  algorithmic avenues for handling mixture models. 

We would like to note that for ensuring reproducibility of our results and as a service to the community, we will release our \textsc{Matlab} implementation of the methods developed in this paper. The manifold CG method that we use is directly based on the excellent toolkit \textsc{ManOpt}~\citep{boumal2014manopt}.

\paragraph{Related work.} The published work on EM is huge, so a summary is impossible. Instead, let us briefly mention a few lines of related work. \citet{jordan96} examine several aspects of EM for GMMs and counter the claims of~\citet{redWal84}, who thought EM to be inferior to general purpose nonlinear programming techniques, especially second-order methods. However, it is well-known, see e.g.,~\citep{jordan96,redWal84}, that EM can attain good likelihood values rapidly, and it scales to much larger problems than amenable to second-order methods. Local convergence analysis of EM is available in~\citep{jordan96}, with more refined and precise results in~\citep{ma2000asymptotic}, who formally show that when data have low overlap, EM can converge locally superlinearly. Our paper develops manifold LBFGS, which being a quasi-Newton method can also display local superlinear convergence.

For GMMs some innovative gradient-based methods have also been suggested~\citep{naim2012convergence,salakhutdinov2003optimization}. In order to satisfy positive definite constraint, the authors suggest to use Cholesky decomposition of covariance matrices. Such a reparametrization makes the objective function of even a single Gaussian nonconvex, and adds spurious stationary points to the objective function. Also, these works report results only for low-dimensional problems and spherical (near spherical) covariance matrices. 

The idea of manifold optimization is new for GMM, but in itself it is a well-developed branch of nonlinear optimization. A classic reference is~\citep{udriste}; a more recent work is~\citep{absil2009optimization}; and even a \textsc{Matlab} toolbox exists now~\citep{boumal2014manopt}. In machine learning, manifold optimization has witnessed increasing interest\footnote{Manifold optimization should not be confused with ``manifold learning'' a separate problem altogether.}, e.g., for low-rank optimization~\citep{vandereycken2013low,journee2010low}, or optimization based on geodesic convexity~\citep{sra2013geometric,wiesel12}. 

Beyond numerics, there is substantial interest in theoretical analysis of mixture models~\citep{dasgupta1999learning,moitra2010,kakade15,balakrishnan2014statistical}. These studies are of great theoretical value (though sometimes limited to either low-dimensional, or small number of mixture components, or spherical Gaussians, etc.), but are orthogonal to our work which focuses on highly practical algorithms for general GMMs.


\section{Background and problem setup}

We begin with some background material, which also serves to establish notation. The key quantity in this paper is the \emph{Gaussian Mixture Model (GMM)} for vectors $\vx \in \reals^d$:
\begin{equation*}
  p(\vx) := \nlsum_{j=1}^K\alpha_j p_{\Nc}(\vx; \vmu_j, \msigma_j),
\end{equation*}
where $p_{\Nc}$ is a (multivariate) Gaussian density with mean $\vmu \in \reals^d$ and covariance $\msigma \succ 0$, i.e.,
\begin{equation*}
  p_{\mathcal{N}}(\vx;\vmu,\msigma) := \det(\msigma)^{-1/2}(2\pi)^{-d/2} \exp \bigl( -\tfrac12(\vx-\vmu)^T \msigma^{-1} (\vx-\vmu) \bigr).
\end{equation*}
Given i.i.d.\ samples $\set{\vx_1,\ldots,\vx_n}$, we seek to estimate  $\set{\hat{\vmu}_j \in \reals^d, \hat{\msigma}_j \succ 0}_{j=1}^K$ and $\hat{\valpha} \in \Delta_K$, the $K$-dimensional probability simplex, via maximum likelihood estimation. This task requires solving the \emph{GMM optimization problem}:
\begin{equation}
  \label{eq:2}
  \max_{\valpha \in \Delta_K,\set{\vmu_j,\msigma_j \succ 0}_{j=1}^K}\quad
  \sum_{i=1}^n\log\Bigl(\nlsum_{j=1}^K\alpha_j p_{\Nc}(\vx_i; \vmu_j,\msigma_j)\Bigr).
\end{equation}
Problem~\eqref{eq:2} in general can require exponential time~\citep{moitra2010}.\footnote{Though recent work shows that under strong assumptions, it has polynomial smoothed complexity~\citep{kakade15}.}  However, our focus is more pragmatic: similar to EM, we also seek to efficiently compute local solutions. Our methods are set in the framework of manifold optimization~\citep{absil2009optimization,udriste}; so let us now recall some material on manifolds.

\subsection{Manifolds and geodesic convexity}
\label{sec:manifold}

A smooth manifold is a non-Euclidean space that locally resembles Euclidean space~\citep{lee12}. For optimization, it is more convenient to consider Riemannian manifolds (smooth manifolds equipped with an inner product on the tangent space at each point). These manifolds possess structure that allows one to extend the usual nonlinear optimization algorithms~\citep{udriste,absil2009optimization} to them. 

Algorithms on manifolds often rely \emph{geodesics}, i.e., curves that (locally) join points along shortest paths. Geodesics help generalize Euclidean convexity to \emph{geodesic convexity}. In particular, say $\Mc$ is a Riemmanian manifold, and $x, y \in \Mc$; also let 
\begin{equation*}
  \gamma_{xy}:[0,1] \to \Mc,\quad \gamma_{xy}(0) = x,\ \gamma_{xy}(1) = y,
\end{equation*}
be a geodesic joining $x$ to $y$. Then, a set $\Ac \subseteq \Mc$ is \emph{geodesically convex} if for all $x, y \in \Ac$ there is a geodesic $\gamma_{xy}$ contained within $\Ac$. Further, a function $f: \Ac \to \reals$ is geodesically convex if for all $x, y \in \Ac$, the composition $f \circ \gamma_{xy}: [0,1] \to \reals$ is convex in the usual sense.

The manifold of interest to us in this paper is $\pp^d$, the manifold of $d\times d$ symmetric positive definite matrices. At any point $\msigma \in \pp^d$, the tangent space is isomorphic to entire set of symmetric matrices; and the Riemannian metric at $\msigma$ is given by $\trace(\msigma^{-1}d\msigma\msigma^{-1}d\msigma)$. This metric induces a geodesic from $\msigma_1$ to $\msigma_2$ that happens to even have a closed-form, specifically~\citep{bhatia07},
\begin{equation*}
  \gamma_{\msigma_1,\msigma_2}(t) := \msigma_1^{1/2}(\msigma_1^{-1/2}\msigma_2\msigma_1^{-1/2})^t\msigma_1^{1/2},\quad 0 \le t \le 1.
\end{equation*}
Thus, a function $f: \pp^d \to \reals$ if geodesically convex on $\pp^d$ if it satisfies
\begin{equation*}
  f(\gamma_{\msigma_1,\msigma_2}(t)) \le (1-t)f(\msigma_1) + tf(\msigma_2),\qquad t \in [0,1],\ \msigma_1,\msigma_2 \in \Ac.
\end{equation*}
Such functions can be nonconvex in the Euclidean sense, but remain globally optimizable due to geodesic convexity. This property has been important in some matrix theoretic applications~\citep{bhatia07,sra15}, and has gained more extensive coverage in several recent works~\citep{ring2012optimization,sra2013geometric,wiesel12}. 

We emphasize that even though the mixture cost~\eqref{eq:2} is not geodesically convex, for GMM optimization geodesic convexity seems to play a crucial role, and it has a huge impact on convergence speed. This behavior is partially expected and analogous to EM, where a convex M-Step makes the overall method much more practical. The next section uses this intuition to elicit geodesic convexity.

\subsection{Problem reformulation}
\label{sec:prob}
We begin with parameter estimation for a single Gaussian: although this has a closed-form solution (which ultimately benefits EM), it requires more subtle handling when applying manifold optimization.  Consider therefore, maximum likelihood parameter estimation for a single Gaussian:
\begin{equation}
  \label{eq:3}
  \max_{\vmu,\msigma \succ 0}\Lc(\vmu,\msigma) := \nlsum_{i=1}^n\log p_{\Nc}(\vx_i; \vmu, \msigma).
\end{equation}
Although~\eqref{eq:3} is convex in the Euclidean sense, it is \emph{not} geodesically convex on its domain $\reals^d\times\pp^d$, which makes it  geometrically not so well-suited to the positive definite matrix manifold. 

To fix this mismatch and turn~\eqref{eq:3} into a geodesically convex problem, we invoke a simple reparamerization\footnote{This reparamerization in itself is probably folklore; its role in GMM optimization is what is crucial here.} that has far-reaching impact. We augment the sample vectors $\vx_i$ by an extra dimension and consider $\vy_i^T = [\vx_i^T\ 1]$; therewith, we transform~\eqref{eq:3} into the problem
\begin{equation}
  \label{eq:5}
  \max_{\ms \succ 0}\ \widehat{\Lc}(\ms) := \nlsum_{i=1}^n \log q_{\Nc}(\vy_i;\ms),
\end{equation}
where we define $q_{\Nc}(\vy_i;\ms) := 2\pi \exp(\tfrac12) p_\mathcal{N}(\vy_i;\ms)$. Prop.~\ref{prop:gc} proves the key property of~\eqref{eq:5}.

\begin{figure}[t]\small
\centering
\subfigure[\small Single Gaussian]{%
  \label{fig:singlecom}%
  \includegraphics[width=.48\textwidth]{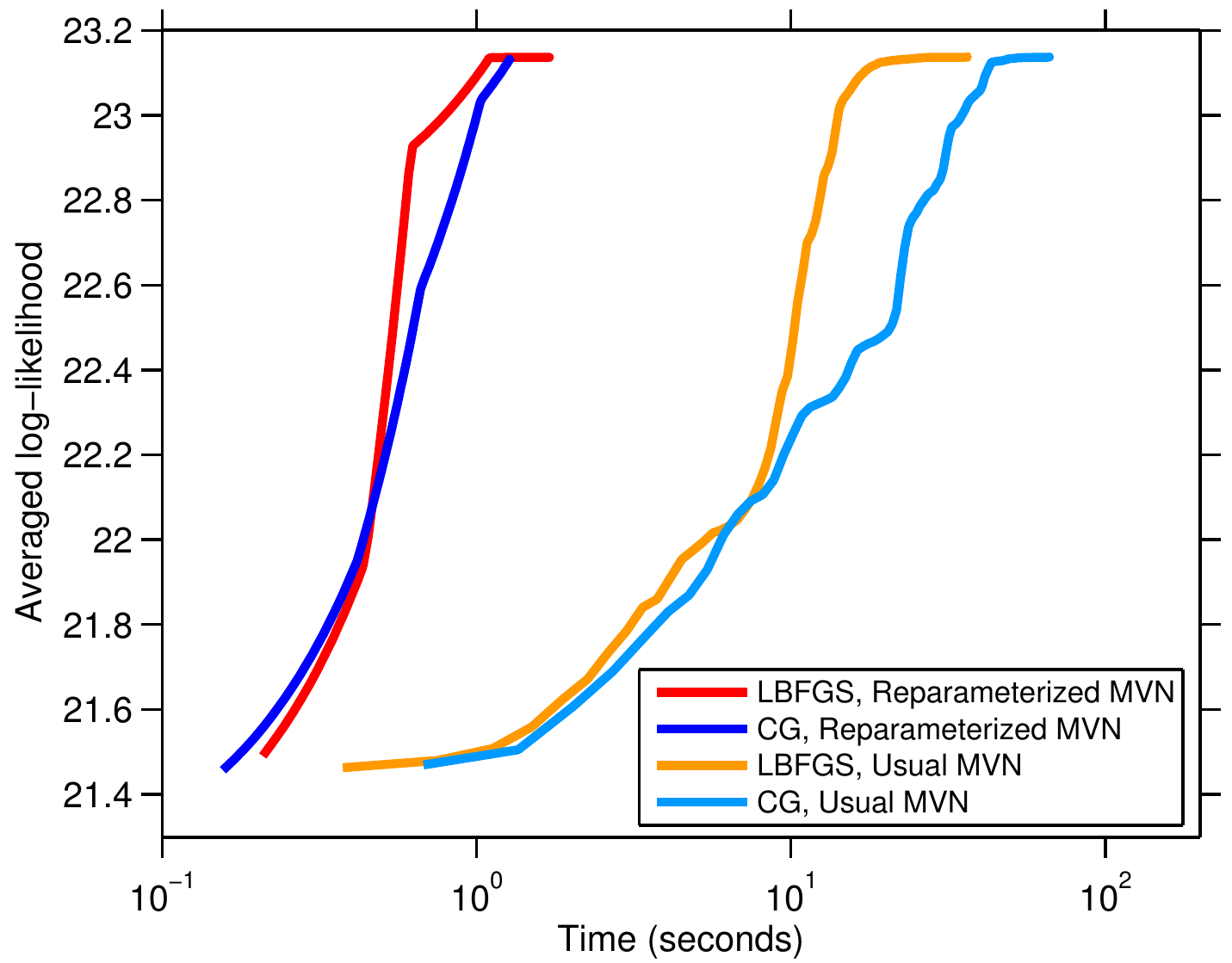}}%
\hfill%
\subfigure[\small Mixtures of seven Gaussians]{%
  \label{fig:sevencom}%
  \includegraphics[width=.47\textwidth]{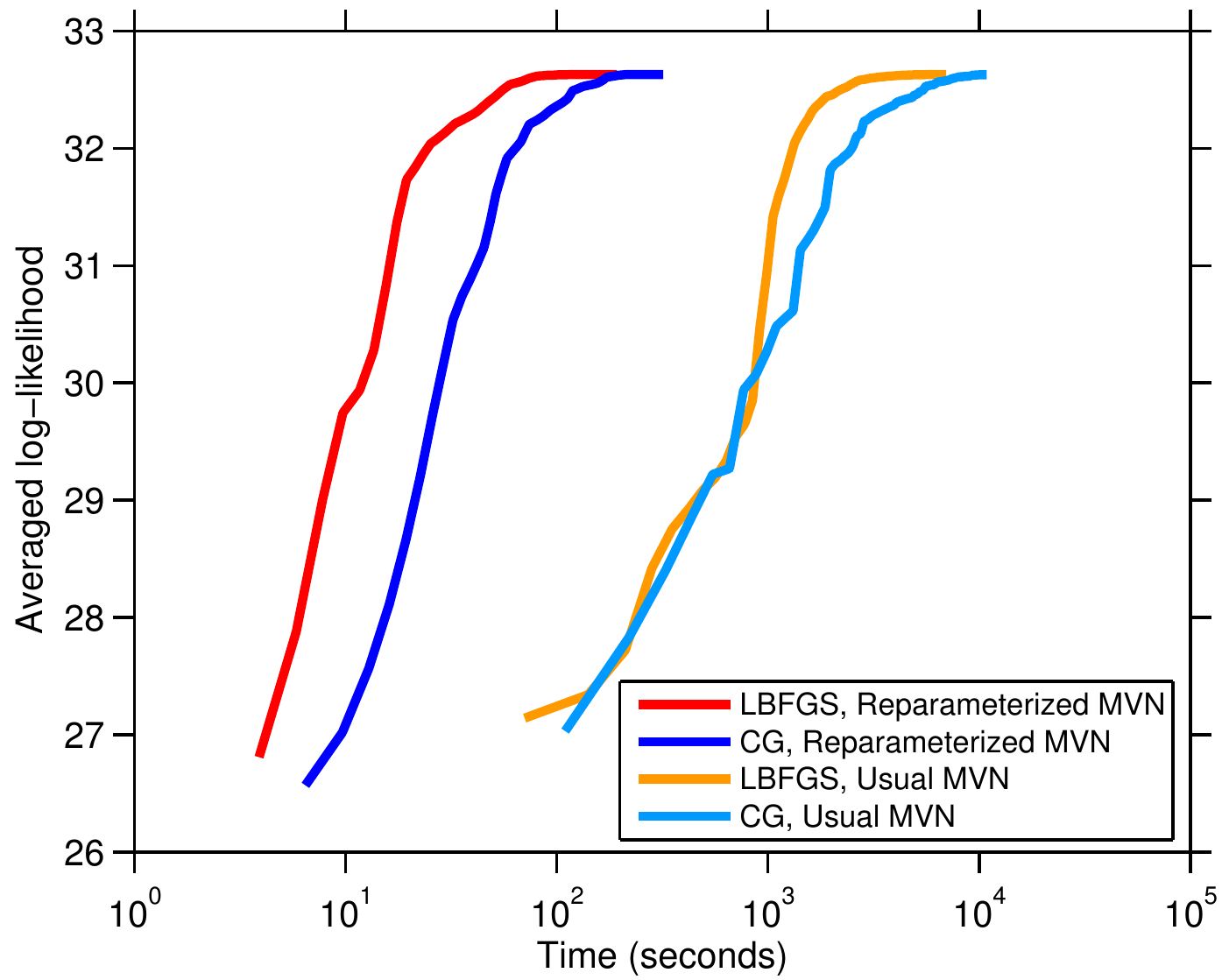}}
\caption{\small
\label{fig:reparam} The effect of reparametrization in convergence speed of manifold CG and manifold LBFGS methods ($d=35$); note that the x-axis (time) is on a logarithmic scale.}
\end{figure}

\begin{prop}
  \label{prop:gc}
  Let $\phi(\ms) \equiv -\widehat{\Lc}(\ms)$, where $\widehat{\Lc}(\ms)$ is as in~\eqref{eq:5}. Then, $\phi$ is geodesically convex.
\end{prop}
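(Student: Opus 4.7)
Proof plan. Write out $\phi(\ms)$ explicitly using the definition of $q_\Nc$: since $q_\Nc(\vy;\ms) = 2\pi\exp(\tfrac12)\,p_\Nc(\vy;\ms)$ with $\vy \in \reals^{d+1}$, we get
\begin{equation*}
\phi(\ms) = \tfrac{n}{2}\log\det(\ms) + \tfrac{1}{2}\sum_{i=1}^{n}\vy_i^{T}\ms^{-1}\vy_i + c,
\end{equation*}
where $c$ is a constant independent of $\ms$. Since a nonnegative combination of geodesically convex functions plus a geodesically linear term is again geodesically convex, it suffices to verify that (i) $\ms\mapsto\log\det(\ms)$ is geodesically linear and (ii) for each fixed $\vy\in\reals^{d+1}$, the map $\ms\mapsto\vy^{T}\ms^{-1}\vy$ is geodesically convex on $\pp^{d+1}$.

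For (i), I would plug in the closed form $\gamma_{\ms_1,\ms_2}(t)=\ms_1^{1/2}(\ms_1^{-1/2}\ms_2\ms_1^{-1/2})^{t}\ms_1^{1/2}$ from Section~\ref{sec:manifold} and use $\log\det(AB)=\log\det(A)+\log\det(B)$ together with $\log\det(X^{t})=t\log\det(X)$ to obtain
\begin{equation*}
\log\det(\gamma_{\ms_1,\ms_2}(t)) = (1-t)\log\det(\ms_1) + t\log\det(\ms_2),
\end{equation*}
which shows geodesic linearity (this is a standard fact from Bhatia's book, cited in the paper).

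For (ii), which is the real content of the proposition, I would set $A \equiv \ms_1^{-1/2}\ms_2\ms_1^{-1/2}$, diagonalize $A = U\Lambda U^{T}$ with $\Lambda=\Diag(\lambda_j)$, $\lambda_j>0$, and define $\vz \equiv U^{T}\ms_1^{-1/2}\vy$. A direct computation gives
\begin{equation*}
\vy^{T}\gamma_{\ms_1,\ms_2}(t)^{-1}\vy \;=\; \vz^{T}\Lambda^{-t}\vz \;=\; \sum_{j=1}^{d+1} z_j^{2}\, e^{-t\log\lambda_j}.
\end{equation*}
Each summand is a nonnegative scalar multiple of an exponential in $t$, hence convex in $t$, so the sum is convex in $t\in[0,1]$. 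Checking the endpoints $t=0$ and $t=1$ recovers $\vy^{T}\ms_1^{-1}\vy$ and $\vy^{T}\ms_2^{-1}\vy$ respectively, so by the definition of geodesic convexity we obtain
\begin{equation*}
\vy^{T}\gamma_{\ms_1,\ms_2}(t)^{-1}\vy \;\le\; (1-t)\,\vy^{T}\ms_1^{-1}\vy + t\,\vy^{T}\ms_2^{-1}\vy.
\end{equation*}

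The only genuinely non-routine step is (ii); everything else is bookkeeping. The main obstacle, if one exists, is simply recognizing that along the PD geodesic the quadratic form diagonalizes into a sum of exponentials in the geodesic parameter, after which the convexity is immediate. Once (i) and (ii) are assembled, geodesic convexity of $\phi$ follows because it is a sum of a geodesically linear term and $n$ geodesically convex terms (note that this reparametrization was crucial: in the original $(\vmu,\ms)$ form, the mixed product $(\vx_i-\vmu)^{T}\ms^{-1}(\vx_i-\vmu)$ is not geodesically convex in $\vmu$, so the augmentation $\vy_i^{T}=[\vx_i^{T}\ 1]$ is what makes the proof work).
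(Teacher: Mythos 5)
Your proof is correct, but it takes a genuinely different route from the paper's. The paper establishes only \emph{midpoint} geodesic convexity (invoking continuity of $\phi$ to upgrade it), and the key step there is the operator geometric-mean inequality $\vx^T(\ms^{1/2}(\ms^{-1/2}\mr\ms^{-1/2})^{1/2}\ms^{1/2})\vx \le [\vx^T\ms\vx]^{1/2}[\vx^T\mr\vx]^{1/2}$, quoted from Bhatia (Thm.~4.1.3) as a standalone lemma, followed by AM--GM to pass from the geometric to the arithmetic mean of the endpoint values. You instead parametrize the entire geodesic, observe that $\gamma_{\ms_1,\ms_2}(t)^{-1} = \ms_1^{-1/2}A^{-t}\ms_1^{-1/2}$ with $A = \ms_1^{-1/2}\ms_2\ms_1^{-1/2}$, and diagonalize to write the quadratic form as $\sum_j z_j^2 e^{-t\log\lambda_j}$, a nonnegative combination of exponentials in $t$; combined with the exact geodesic linearity of $\log\det$, convexity of $t\mapsto\phi(\gamma(t))$ is immediate. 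Your argument is self-contained and elementary: it needs no external matrix inequality and no continuity/midpoint upgrade, and it in fact shows each term $\ms\mapsto\vy^T\ms^{-1}\vy$ is not just geodesically convex but geodesically log-convex (a sum of log-convex functions of $t$ is log-convex), which is slightly stronger than what the paper records. What the paper's route buys is brevity given the cited lemma and a formulation in terms of the matrix geometric mean that fits the operator-theoretic framework the authors draw on elsewhere. Your closing remark about why the augmentation $\vy_i^T=[\vx_i^T\ 1]$ is essential is also consistent with the paper's discussion around~\eqref{eq:3}.
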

We omit the proof for space reasons; it may be found in the appendix.

Theorem~\ref{thm.gauss} shows that solving the reformulation~\eqref{eq:5} also solves the original problem~\eqref{eq:3}.

\begin{theorem} 
  \label{thm.gauss}
  If $\vmu^*, \msigma^*$ maximize~\eqref{eq:3}, and if $\ms^*$ maximizes~\eqref{eq:5}, then $\widehat{\Lc}(\ms^*) = \Lc(\vmu^*, \msigma^*)$ for
  \begin{equation*}
    \ms^* = \begin{pmatrix}
    \msigma^* + \vmu^* {\vmu^*}^T & \vmu^* \\
    {\vmu^*}^T & 1
  \end{pmatrix}.
\end{equation*}
\end{theorem}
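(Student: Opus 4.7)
The plan is to reduce the $(d+1)$-dimensional single-Gaussian likelihood $\widehat{\Lc}(\ms)$ to the $d$-dimensional likelihood $\Lc(\vmu,\msigma)$ plus an explicit scalar correction, and then show that the correction is maximized at precisely the value of the extra degree of freedom that recovers the form of $\ms^*$ claimed in the statement. Observe that $\ms$ is $(d+1)\times(d+1)$ positive definite, so it carries one more free parameter than $(\vmu,\msigma)$; this extra parameter is exactly the scalar bottom-right entry, and the core of the argument is to show that the likelihood pins it down to $1$ at the optimum.

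Concretely, I would begin by writing a generic $\ms\succ 0$ in block form
\[
\ms = \begin{pmatrix} A & \vb \\ \vb^T & c\end{pmatrix},\qquad c>0,\ A\succ 0,\ A-\vb\vb^T/c\succ 0,
\]
and introducing the change of variables $\tilde\vmu := \vb/c$ and $\tilde\msigma := A - \vb\vb^T/c$ (the Schur complement). A direct block-inverse computation gives
\[
\ms^{-1}=\begin{pmatrix}\tilde\msigma^{-1} & -\tilde\msigma^{-1}\tilde\vmu \\ -\tilde\vmu^T\tilde\msigma^{-1} & 1/c + \tilde\vmu^T\tilde\msigma^{-1}\tilde\vmu\end{pmatrix},
\]
and plugging in $\vy_i^T = [\vx_i^T\ 1]$ yields the key identity
\[
\vy_i^T\ms^{-1}\vy_i = (\vx_i-\tilde\vmu)^T\tilde\msigma^{-1}(\vx_i-\tilde\vmu) + 1/c.
\]
Together with $\det(\ms) = c\,\det(\tilde\msigma)$ (block-determinant formula), this turns each term of $\widehat{\Lc}$ into the corresponding term of $\Lc(\tilde\vmu,\tilde\msigma)$ multiplied by a factor depending only on $c$, so that after the chosen normalization of $q_\Nc$,
\[
\widehat{\Lc}(\ms) = \Lc(\tilde\vmu,\tilde\msigma) + n\,g(c),\qquad g(c) := \tfrac12 - \tfrac{1}{2c} - \tfrac12\log c.
\]

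The final step is to study the scalar function $g$. Its derivative $g'(c) = \tfrac{1}{2c^2} - \tfrac{1}{2c} = (1-c)/(2c^2)$ vanishes only at $c=1$, where $g(1)=0$, and $g(c)<0$ for all other $c>0$ (strict concavity at the maximum, plus $g(c)\to -\infty$ at $0$ and $\infty$). Hence, for every admissible $\ms$, $\widehat{\Lc}(\ms)\le \Lc(\tilde\vmu,\tilde\msigma)\le \Lc(\vmu^*,\msigma^*)$, with the first inequality tight iff $c=1$. Conversely, setting $c=1$, $\vb=\vmu^*$, $A=\msigma^*+\vmu^*\vmu^{*T}$ gives a feasible $\ms$ whose $(\tilde\vmu,\tilde\msigma)$ equals $(\vmu^*,\msigma^*)$, and this $\ms$ is exactly the $\ms^*$ in the statement; it therefore attains $\widehat{\Lc}(\ms^*) = \Lc(\vmu^*,\msigma^*)$, proving the claim.

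The only nonroutine step I anticipate is recognizing \emph{why} the lifting to dimension $d+1$ leaves one genuine extra parameter and then pinning it down via the scalar function $g$; once the Schur-complement reparameterization is in hand, the computation is forced. The block-inverse and block-determinant identities and the one-dimensional optimization of $g$ are all standard.
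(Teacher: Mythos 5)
Your proof is correct and follows essentially the same route as the paper: a Schur-complement block decomposition of $\ms$, reduction of $\widehat{\Lc}$ to the $d$-dimensional likelihood plus a scalar correction in the bottom-right entry, and a one-variable optimization showing that entry must equal $1$ at the optimum. Your writeup is in fact more careful than the paper's (the explicit $g(c)=\tfrac12-\tfrac{1}{2c}-\tfrac12\log c$ and the reparameterization $\tilde\vmu=\vb/c$ make the decoupling exact, whereas the paper's displayed formula contains sign/log typos), but there is no substantive difference in approach.
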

\begin{proof}
  We decompose $\ms$ via Schur complements into the components (using \textsc{Matlab} notation):
  \begin{align*}
    \mU = \ms_{\{1:d,1:d\}} - \frac{1}{\ms_{d+1,d+1}}\ms_{\{1:d,d+1\}}\ms_{\{d+1, 1:d\}}, \quad \vt = \ms_{\{1:d,d+1\}},\quad s = \ms_{\{d+1,d+1\}}.
  \end{align*}
  The objective function $\widehat{\Lc}(\ms)$ in terms of these parameters becomes
  \begin{equation*}
    \begin{split}
      \widehat{\Lc}(\mU,\vt,s) &=\text{const} + \tfrac{n}{2} s -\tfrac{n}{2}\det(\mU) - \nlsum_{i=1}^n \tfrac12(\vx_i-\vt)^T \mU^{-1} (\vx_i-\vt) + \tfrac{n}{2s}.
    \end{split}
  \end{equation*}
  Optimizing $\widehat{\Lc}$ over $s>0$ we see that $s^{*}=1$ must hold; so we can eliminate $s$. Hence, the objective reduces to a $d$-dimensional Gaussian log-likelihood, for which clearly $\mU^*=\msigma^*$ and $\vt^*=\vmu^*$.
\end{proof}

Theorem~\ref{thm.gauss} shows that the reparameterization is ``faithful'' as it leaves the optimum unchanged. Figure~\ref{fig:reparam} shows the true import of this reparametrization: its dramatic impact on the empirical behavior Riemmanian Conjugate-Gradient (CG) and Riemannian LBFGS is unmistakable. 

\begin{theorem}
  \label{thm:gmm.reparam}
  A local maximum of the reparameterized GMM log-likelihood
  \begin{equation*}
    \widehat{\Lc}(\{\ms_j\}_{j=1}^K) 
    := \nlsum_{i=1}^n \log\Bigl(\nlsum_{j=1}^K\alpha_j q_{\Nc}(\vy_i; \ms_j)\Bigr)
  \end{equation*}
  is a local minimum of the original log-likelihood
  \begin{equation*}
    \Lc(\{\vmu_j,\msigma_j\}_{j=1}^K) := 
    \nlsum_{i=1}^n \log\Bigl(\nlsum_{j=1}^K\alpha_jp_{\Nc}(\vx_i|\vmu_j,\msigma_j)\Bigr).
  \end{equation*}
\end{theorem}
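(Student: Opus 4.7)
The plan is to exploit the Schur-complement parameterization from the proof of Theorem \ref{thm.gauss}: each augmented covariance $\ms_j\in\pp^{d+1}$ corresponds bijectively, via a smooth diffeomorphism, to a triple $(\mU_j,\vmu_j,s_j)\in\pp^d\times\reals^d\times\reals_{>0}$. Because this change of variables is a diffeomorphism, local maxima of $\widehat{\Lc}$ in the $\ms_j$ coordinates are in one-to-one correspondence with local maxima in the $(\mU_j,\vmu_j,s_j)$ coordinates. The strategy then has two steps: (i) show that at any critical point of $\widehat{\Lc}$ one must have $s_j^*=1$ for every active component; and (ii) restrict to the affine slice $\{s_j=1\}_{j=1}^K$ and observe that on this slice $\widehat{\Lc}$ agrees with $\Lc$ up to an additive constant.

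For step (i), a direct Schur-complement calculation with $\vy_i^T=[\vx_i^T\ 1]$ gives
\begin{equation*}
q_\Nc(\vy_i;\ms_j)\;=\;\sqrt{2\pi}\,h(s_j)\,p_\Nc(\vx_i;\vmu_j,\mU_j),
\qquad
h(s)\;:=\;s^{-1/2}\exp\!\bigl(\tfrac12-\tfrac{1}{2s}\bigr),
\end{equation*}
where crucially $h(1)=1$ and $h'(1)=0$; in fact $h$ is strictly unimodal on $(0,\infty)$ with its unique maximum at $s=1$ -- precisely the ``$s^*=1$'' optimality condition already implicit in Theorem \ref{thm.gauss}. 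Substituting into the mixture log-likelihood and differentiating yields
\begin{equation*}
\frac{\partial \widehat{\Lc}}{\partial s_j}\;=\;\alpha_j\,h'(s_j)\sum_{i=1}^n\frac{p_\Nc(\vx_i;\vmu_j,\mU_j)}{\sum_{k=1}^K\alpha_k\,h(s_k)\,p_\Nc(\vx_i;\vmu_k,\mU_k)}.
\end{equation*}
Whenever $\alpha_j>0$, the summation factor on the right is strictly positive, so criticality forces $h'(s_j)=0$ and hence $s_j^*=1$.

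For step (ii), suppose $\{\ms_j^*\}_{j=1}^K$ is a local maximum of $\widehat{\Lc}$, and let $(\mU_j^*,\vmu_j^*,s_j^*)$ be the corresponding triples. Step (i) forces $s_j^*=1$ for all active $j$, so there is an open neighborhood $U$ of $\{(\mU_j^*,\vmu_j^*,1)\}_j$ on which $\widehat{\Lc}$ attains its maximum at this point. Intersecting $U$ with the slice $\{s_j=1:\,1\le j\le K\}$ produces an open neighborhood of $\{(\vmu_j^*,\mU_j^*)\}_j$ on which the restriction of $\widehat{\Lc}$ -- which, by the identity above with $h(1)=1$, equals $\Lc(\{\vmu_j,\mU_j\})$ up to the additive constant $\tfrac{n}{2}\log 2\pi$ -- attains its maximum at $\{(\vmu_j^*,\mU_j^*)\}_j$. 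Setting $\msigma_j^*:=\mU_j^*$ therefore gives the desired local maximum of $\Lc$ (the theorem's ``local minimum'' appears to be a typographical slip for ``local maximum,'' as \eqref{eq:2} is a maximization problem).

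I do not anticipate any serious obstacles: once the separable scalar factor $h(s_j)$ is identified, the argument is routine chain-rule bookkeeping followed by a trivial restriction-to-a-slice argument. The only items deserving explicit verification are that the Schur-complement map $\ms_j\leftrightarrow(\mU_j,\vmu_j,s_j)$ is a smooth bijection onto $\pp^{d+1}$ (so that the topology of ``local maximum'' is preserved across parameterizations) and that the inner sum in $\partial\widehat{\Lc}/\partial s_j$ is strictly positive; both are immediate.
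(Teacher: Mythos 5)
Your proof is correct and follows essentially the same route as the paper: the Schur-complement change of variables, the observation that first-order stationarity forces $s_j^*=1$ for every component with $\alpha_j>0$, and the identification of $\widehat{\Lc}$ with $\Lc$ (up to an additive constant) on the slice $\{s_j=1\}$. Your version is in fact more explicit than the paper's---which only gestures at ``an argument similar to that for Theorem~\ref{thm.gauss}'' applied to the responsibility-weighted single-component cost---and the only convention you leave implicit, needed for the factorization $q_{\Nc}(\vy_i;\ms_j)=\sqrt{2\pi}\,h(s_j)\,p_{\Nc}(\vx_i;\vmu_j,\mU_j)$ to separate cleanly in $s_j$, is that the mean coordinate be taken as $\vmu_j=\ms_{\{1:d,d+1\}}/s_j$ rather than $\ms_{\{1:d,d+1\}}$.
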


Theorem~\ref{thm:gmm.reparam} shows that we can replace~\eqref{eq:2} by a reparameterized log-likelihood whose local maxima agree with those of~\eqref{eq:2}. Moreover, the individual components of the reparameterized log-likelihood are geodesically convex, which once again has a huge empirical impact (see Figure~\ref{fig:reparam}). 

We also need to replace the constraint $\valpha \in \Delta_K$ to make the problem unconstrained. We do this via a commonly used change of variables~\citep{jordan1994hierarchical}:
\begin{equation*}
\eta_k =  \log \biggl ( \frac{ \alpha_k}{\alpha_K} \biggr),\quad k=1,\hdots,K-1.
\end{equation*}
Assume $\eta_K=0$ to be a constant, then the final optimization problem is given by:
\begin{equation}
  \label{eq:6}
  \max_{\{\ms_j \succ 0\}_{j=1}^K,\{\eta_j\}_{j=1}^{K-1}}  \widehat{\Lc}(\{\ms_j\}_{j=1}^K,\{\eta_j\}_{j=1}^{K-1}) 
  := \sum_{i=1}^n \log\Bigl(\sum_{j=1}^K\frac{\exp(\eta_j)}{\sum_{k=1}^K \exp(\eta_k)} q_{\Nc}(\vy_i; \ms_j)\Bigr)
\end{equation}
We view~\eqref{eq:6} as a manifold optimization problem; specifically, it is an optimization problem on the product manifold $\bigl(\prod_{j=1}^K\pp^d\bigr) \times \reals^{K-1}$. Let us see how to solve it.

\section{Manifold Optimization}
\label{sec:manopt}

A common approach for unconstrained optimization on Euclidean spaces is to iteratively apply the following two steps: (i) find a descent direction; and (ii) perform a line-search to obtain sufficient decrease (to ensure convergence). 

The difference when optimizing on manifolds is that the descent direction is computed on the tangent space. At a point $X$ on the manifold, the tangent space $T_X$ is the approximating vector space (see Fig.~\ref{fig:cg}). Given a descent direction $\xi_X \in T_X$, line-search is performed along a smooth curve on the manifold (red curve in Fig.~\ref{fig:cg}). The derivative of this curve at point $X$ equals the descent direction $\xi_X$. We refer the reader to~\citep{absil2009optimization,udriste} for an in depth introduction to manifold optimization.

Successful large-scale (Euclidean) optimization methods such as conjugate-gradient and LBFGS, combine gradients at the current point with gradients and descent directions from previous points to generate a descent direction at the current point. To adapt such algorithms to manifolds, in addition to defining gradients on manifolds, we also need to define how to transport vectors in a tangent space at one point, to vectors in a different tangent space at another point.

On Riemannian manifolds, the gradient is simply defined as a direction on the tangent space, where the inner-product of the gradient and another direction in the tangent space gives the directional derivative of the function. Formally, if $g_X$ defines the inner product in the tangent space $T_X$, then
\begin{equation*}
Df(X)\xi = g_X(\text{grad} f(X),\xi),\quad\text{for}\ \xi \in T_X.
\end{equation*} 
Given a descent direction in the tangent space, the curve along which we do the line-search can be a geodesic. A map that takes the direction and a step length, and yields a corresponding point on the geodesic is called an exponential map. A Riemannian manifold also comes with a natural way of transporting vectors on geodesics, which is called parallel transport. Intuitively, a parallel transport is a differential map with zero derivative along the geodesics. Algorithm~\ref{alg.opt} sketches a generic manifold optimization algorithm.

\begin{algorithm}[h]
  \caption{\small Sketch of optimization algorithms (CG, LBFGS) on manifold }
  \label{alg.opt}
  \begin{algorithmic}\small
    \State {\bf Given:} Riemannian manifold $\Mc$ with Riemannian metric $g$; parallel transport $\mathcal{T}$ on $\Mc$; exponential map $R$; initial value $X_0$; a smooth function $f$
    \For{$k=0,1,\ldots$}
      \State Obtain a descent direction based on stored information and $\text{grad} f(X_k)$ using defined $g$ and $\mathcal{T}$
      \State Use line-search to find $\alpha$ such that it satisfies appropraite conditions
      \State Calculate $X_{k+1}=R_{X_k}(\alpha \xi_k)$
      \State Based on the memory and need of algorithm store $X_k$, $\text{grad} f(X_k)$ and $\alpha \xi_k$
    \EndFor
    \State \textbf{return} $X_k$
  \end{algorithmic}
\end{algorithm}

Table~\ref{tbl:psdSummary} summarizes the key quantities for the positive definite matrix manifold. Note that a product space of Riemannian manifolds is again a Riemannian manifold with the exponential map, gradient and parallel transport defined as the Cartesian product of individual expressions; the inner product is defined as the sum of inner product of the components in their respective manifolds.

\begin{table}\small
  \caption{\small Summary of Riemannian expressions for PSD matrices}
  \label{tbl:psdSummary}
  \begin{tabular}{|c|l|}
      \hline
    Definition & Expression for PSD matrices\\
    \hline
    Tangent space& Space of symmetric  matrices \\
    Metric between two tangent vectors $\xi,\eta$ at $\Sigma$ & $g_{\Sigma}(\xi,\eta)= \trace (\Sigma^{-1} \xi \Sigma^{-1} \eta)$  \\
    Gradient at $\Sigma$ if Euclidean gradient is $\nabla f(\Sigma)$ & $\text{grad} f(\Sigma) = \tfrac12 \Sigma (\nabla f(X)+\nabla f(X)^T) \Sigma$\\
    Exponential map at point $\Sigma$ in direction $\xi$ & $R_{\Sigma}(\xi)= \Sigma \exp (\Sigma^{-1} \xi )$  \\
    Parallel transport of tangent vector $\xi$ from $\Sigma_1$ to $
    \Sigma_2 $& $\mathcal{T}_{\Sigma_1,\Sigma_2}(\xi)= E \xi E^T,\quad E =(\Sigma_2\Sigma_1^{-1})^{1/2}$\\
    \hline
  \end{tabular}
\end{table}

Different variants of LBFGS can be defined depending where to perform vector transport. We found that the version developed in \cite{sra15} gives the best performance. We implemented this algorithm together with the crucial line-search algorithm satisfying Wolfe conditions, which we now explain.

\subsection{Line-search algorithm satisfying Wolfe conditions}
\begin{wrapfigure}{l}{5.5cm}
  \centering
  \includegraphics[scale=0.5]{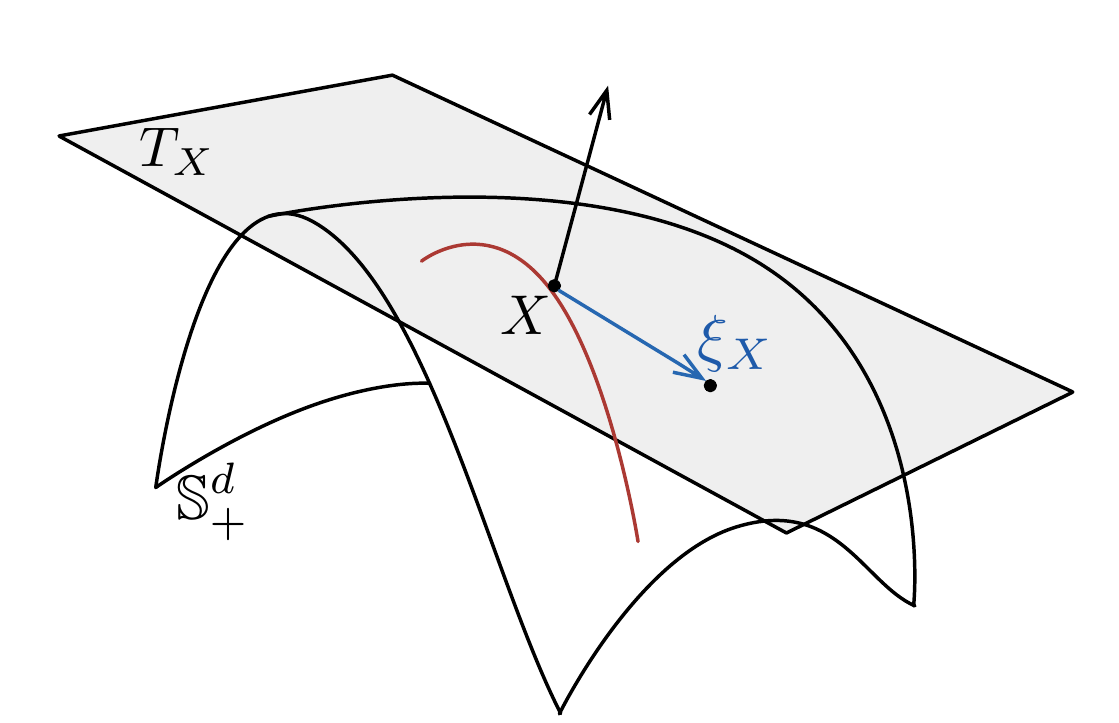}
  \caption{\footnotesize Visualization of line-search on a manifold: $X$ is a point on the manifold, $T_X$ is the tangent space at the point $X$, $\xi_X$ is a descent direction at $X$; the red curve is the curve along which line-search is performed. }
  \label{fig:cg}
\end{wrapfigure}
To ensure LBFGS on the manifold always produces a descent direction, it is necessary to ensure that the line-search algorithm satisfy Wolfe conditions~\citep{ring2012optimization}. 
These conditions are given by:
\begin{align}
f(R_{X_k}(\alpha \xi_k)) &\leq f(X_k) + c_1 \alpha D f(X_k) \xi_k \\
D f(X_{k+1}) \xi_{k+1} &\geq c_2 D f(X_k) \xi_k,
\label{eq.wolfe}
\end{align}
where $0<c_1<c_2<1$. Note that $\alpha D f(X_k) \xi_k=g_{X_{k}}(\text{grad} f(X_k),\alpha \xi_k)$, i.e., the derivative of $f(X_k)$ in the direction $\alpha \xi_k$ equals the inner product of descent direction and gradient of the function. Practical line-search algorithms implement a stronger version of~\eqref{eq.wolfe}, leading to the so-called strong Wolfe condition:
\begin{equation*}
|D f(X_{k+1}) \xi_{k+1}|  \leq c_2D f(X_k) \xi_k.
\end{equation*}

Similar to the line-search algorithm in Euclidean case, the line-search algorithm is divided into two phases: bracketing and zooming~\citep{nocedal2006numerical}. During bracketing, an interval is found such that a point satisfying Wolfe conditions can be found in this interval. In the zooming phase, the actual point in the interval satisfying the conditions is obtained. The one-dimensional function and its gradient that the line-search uses are defined as $\phi(\alpha) = f(R_{X_k}(\alpha \xi_k))$  and $\phi^{\prime}(\alpha) = \alpha D f(X_k) \xi_k$, respectively. The algorithm is the same as the line-search in the Euclidean space, but we present details for its manifold incarnation in the appendix for the reader's convenience. Theory behind how this algorithm is guaranteed to find a step-length satisfying (strong) Wolfe conditions can be found in \citep{nocedal2006numerical}. 

The initial step-length $\alpha_1$ can be guessed using the previous function and gradient information. We propose the following choice that turns out to be quite effective:
\begin{equation}
\label{eq.alpha1}
\alpha_1 = 2\frac{f(X_k) - f(X_{k-1})}{  D f(X_k) \xi_k}.
\end{equation}
Equation~\eqref{eq.alpha1} is obtained by finding $\alpha^*$ that minimizes a quadratic approximation of the function along the geodesic through the previous point (based on $f(X_{k-1})$, $f(X_k)$ and $D f(X_{k-1}) \xi_{k-1}$):
\begin{equation}
\label{eq.quadappr}
\alpha^* = 2\frac{f(X_k) - f(X_{k-1})}{  D f(X_{k-1}) \xi_{k-1}}.
\end{equation}
Then assuming that first-order change will be the same as in the previous step, we write
\begin{equation}
\label{eq.samefirst}
\alpha^*  D f(X_{k-1}) \xi_{k-1} \approx \alpha_1  D f(X_{k}) \xi_{k}.
\end{equation}
Combining~\eqref{eq.quadappr} and~\eqref{eq.samefirst}, we obtain our procedure of selection $\alpha_1$ expressed in~\eqref{eq.alpha1}. \citet{nocedal2006numerical} suggest using either $\alpha^*$ of \eqref{eq.quadappr} for the initial step-length $\alpha_1$, or using~\eqref{eq.samefirst} where $\alpha^*$ is set to be the step-length obtained in the line-search in the previous point. We observed the choice \eqref{eq.alpha1} proposed above, leads to substantially better performance than the other two approaches.

\section{Experimental Results}
\label{sec:expt}

We have performed numerous experiments to examine the effectiveness of the presented method. We report performance comparisons on both real and simulated data. In all experiments, we initialize the mixture parameters using k-means++~\citep{arthur2007}, and we start all methods using the same initialization. All methods also use the same termination criteria: they stop either when the difference of average log-likelihood falls below $10^{-6}$, or when the number of iterations exceed $1500$. Many more results for both simulated data and real data can be found in the appendix. 

\subsection*{Simulated Data}
EM's performance is well-known to depend on the degree of separation of the mixture components~\citep{jordan96,ma2000asymptotic}. To assess the impact of this separation on our methods, we generate data as proposed in~\citep{dasgupta1999learning, verbeek2003efficient}. The distributions are sampled so their means satisfy the following inequality:
\begin{equation*}
\forall_{i \neq j}: \norm{\vm_i - \vm_j}\geq c \max_{i,j} \{ \trace (\msigma_i) - \trace (\msigma_j) \},
\end{equation*}
where $c$ models the degree of separation. Since mixtures with high eccentricity have smaller overlap, in addition to high eccentricity $e=10$ (eccentricity is defined as the ratio of the largest eigenvalue to the smallest eigenvalue of the covariance matrix), we also test the  (spherical) case where components do not have any eccentricity, so $e=1$. We test three levels of separation $c = 0.2$ (low), $c = 1$ (medium) and $c=5$ (high). We test two different numbers of mixture components $K=2$ and $K=5$; we consider experiments with larger values of $K$ for our real data experiments.
\begin{table}\small
  \caption{\small
    Speed and average log-likelihood (ALL) comparisons for $d=20$, $e=10$ (each row reports results averaged over 20 runs over different datasets, so the ALL values are not comparable to each other).}
  \label{tbl:expd20e10}
  \rowcolors{2}{gray!20}{white}
  \begin{tabular}{lr|rr|rr|rr}
    \hline
    & & \multicolumn{2}{c|}{EM Algorithm}   & \multicolumn{2}{c|}{LBFGS Reparametrized} & \multicolumn{2}{c}{CG Reparametrized} \\
    &   & Time (s) & ALL       & Time (s) & ALL& Time (s) & ALL\\
    \hline
 $c=0.2$ & $K=2$ & 1.0 $\pm$ 0.5 & -11.3 & 5.6 $\pm$ 3.2 & -11.3 & 3.6 $\pm$ 1.9 & -11.5 \\
& $K=5$ & 35.4 $\pm$ 53.1 & -12.8 & 50.0 $\pm$ 32.1 & -12.8 & 47.1 $\pm$ 41.9 & -12.9 \\[5pt]
$c=1$ & $K=2$ & 0.5 $\pm$ 0.2 & -10.8 & 3.1 $\pm$ 1.0 & -10.8 & 2.6 $\pm$ 0.7 & -10.8 \\
& $K=5$ & 103.6 $\pm$ 114.6 & -13.5 & 72.9 $\pm$ 62.6 & -13.4 & 42.4 $\pm$ 27.9 & -13.3 \\[5pt]
$c=5$ & $K=2$ & 0.2 $\pm$ 0.2 & -11.2 & 2.9 $\pm$ 1.4 & -11.2 & 2.3 $\pm$ 0.9 & -11.2 \\
& $K=5$ & 36.1 $\pm$ 70.9 & -12.8 & 27.7 $\pm$ 32.5 & -12.8 & 30.4 $\pm$ 42.2 & -12.8 \\
  \end{tabular}
\end{table}

For $e=1$, the results for data with dimensionality equal to 20 are given in Table~\ref{tbl:expd20e10}. The results are obtained after running with 20 different random choices of parameters for each configuration. From the tables it is apparent that the performance of EM  and Riemannian optimization with our reparametrization are very similar. The variance of computation time shown by Riemmanian optimization is, however, notably smaller. 

In another set of simulated data experiments, we apply different algorithms for the case where there is no eccentricity; the results are shown in Table~\ref{tbl:expd20e1}. The interesting case is the case of low separation $c=0.2$, where the condition number of the Hessian becomes large. As predicted by theory, the EM converges very slowly in such a case; Table~\ref{tbl:expd20e1} confirms this claim. It is known that in this such a case, the performance of powerful optimization approaches like CG and LBFGS also degrades~\citep{nocedal2006numerical}. But both CG and LBFGS suffer less than EM, and LBFGS performs noticeably better than CG.

\begin{table}\small
  \caption{\small
Speed and ALL comparisons for $d=20$, $e=1$.}
  \label{tbl:expd20e1}
  \rowcolors{2}{gray!20}{white}
  \begin{tabular}{lr|rr|rr|rr}
    \hline
    & & \multicolumn{2}{c|}{EM Algorithm}   & \multicolumn{2}{c|}{LBFGS Reparametrized} & \multicolumn{2}{c}{CG Reparametrized} \\
    &   & Time (s) & ALL       & Time (s) & ALL& Time (s) & ALL\\
    \hline
    $c=0.2$ & $K=2$ & 72.9 $\pm$ 37.7 & 17.6 & 40.6 $\pm$ 21.6 & 17.6 & 49.4 $\pm$ 31.7 & 17.6 \\
    & $K=5$ & 396.7 $\pm$ 136.6 & 17.5 & 156.1 $\pm$ 80.2 & 17.5 & 216.3 $\pm$ 51.4 & 17.5 \\[5pt]
    $c=1$ & $K=2$ & 7.0 $\pm$ 8.4 & 17.1 & 13.9 $\pm$ 13.7 & 17.0 & 16.7 $\pm$ 18.7 & 17.0 \\
    & $K=5$ & 38.6 $\pm$ 67.0 & 16.2 & 43.8 $\pm$ 38.5 & 16.2 & 58.4 $\pm$ 47.4 & 16.2 \\[5pt]
    $c=5$ & $K=2$ & 0.2 $\pm$ 0.1 & 17.1 & 3.0 $\pm$ 0.5 & 17.1 & 2.7 $\pm$ 0.8 & 17.1 \\
    & $K=5$ & 26.4 $\pm$ 55.3 & 16.1 & 20.2 $\pm$ 18.4 & 16.1 & 23.3 $\pm$ 27.8 & 16.1
  \end{tabular}
\end{table}

\subsection*{Real Data}
We now present performance evaluation on natural image datasets, where mixtures of Gaussians were reported to be a good fit to the data~\citep{zoran2012natural}.  We extracted 200,000 image patches of size $6 \times 6$ from images and subtracted the DC component, leaving us with 35-dimensional vectors. Performance of different algorithms are reported in Table~\ref{tbl:expnatural}. As for simulated results, performance of EM and manifold CG on the reparametrized parameter space is similar. Manifold LBFGS converges notably faster (except for $K=6$) than both EM and CG. Without our reparamerization, performance of the manifold methods degrades substantially; because the experiments take too long to run, we report only the degraded behavior of CG, which runs about 20 times slower than reparametrized CG and LBFGS. Note that for $N=6$ and $N=8$, CG without reparametrization stops because it hits the bound of a maximum 1500 iterations, and therefore its ALL is smaller than the other two methods.

\begin{table}\small
  \caption{\small
Speed and ALL comparisons for natural image data $d=35$.}
  \label{tbl:expnatural}
  \begin{tabular}{lrr|rr|rr|rr}
    \hline
    & \multicolumn{2}{c|}{EM Algorithm}   & \multicolumn{2}{c|}{LBFGS Reparametrized} & \multicolumn{2}{c|}{CG Reparametrized} & \multicolumn{2}{c}{CG Usual} \\
    & Time (s) & ALL       & Time (s) & ALL& Time (s) & ALL & Time (s) & ALL \\
    \hline
    $K=2$ & 16.61 & 29.28 & 14.23 & 29.28 & 17.52 & 29.28 & 947.35 & 29.28 \\
   $K=3$ & 90.54 & 30.95 & 38.29 & 30.95 & 54.37 & 30.95 & 3051.89 & 30.95 \\
   $K=4$ & 165.77 & 31.65 & 106.53 & 31.65 & 153.94 & 31.65 & 6380.01 & 31.64 \\
  $K=5$ & 202.36 & 32.07 & 117.14 & 32.07 & 140.21 & 32.07 & 5262.27 & 32.07 \\
   $K=6$ & 228.80 & 32.36 & 245.74 & 32.35 & 281.32 & 32.35 & 10566.76 & 32.33 \\
    $K=7$ & 365.28 & 32.63 & 192.44 & 32.63 & 318.95 & 32.63& 10844.52 & 32.63 \\
    $K=8$ & 596.01 & 32.81 & 332.85 & 32.81 & 536.94 & 32.81& 14282.80 & 32.58 \\
    $K=9$ & 900.88 & 32.94 & 657.24 & 32.94 & 1449.52 & 32.95 & 15774.88 & 32.77 \\
    $K=10$ & 2159.47 & 33.05 & 658.34 & 33.06 & 1048.00 & 33.06& 17711.87 & 33.03 \\
    \hline 
  \end{tabular}
\end{table}

\section{Conclusions and future work}
We proposed Riemannian manifold optimization as a counterpart to the EM algorithm for fitting Gaussian mixture models. We demonstrated that for enabling manifold optimization to attain its true potential on GMMs, and to either match or outperform EM, it is necessary to represent the parameters in a different space and adjust the cost function accordingly. Extensive experimentation with both experimental and real datasets yielded quite encouraging results, suggesting that manifold optimization may hold the potential to open new algorithmic avenues for mixture modeling. 

Several strands of practical value are immediate from our work (and are a part of our ongoing efforts):  (i) extension to large-scale mixtures (both large $n$ and large $K$) through stochastic manifold optimization~\citep{bonnabel2013}, especially given the importance of stochastic methods in the Euclidean setting; (ii) use of richer classes of priors with GMMs than the usual inverse Wishart priors (which are common, as they leave the M-step simple); this prior is actually geodesic convex and fits within the broader class of geodesic priors that our framework enables; (iii) 
incorporation of penalties for avoiding tiny clusters; such penalties fit in easily in our framework, though they are not as easy to use in the EM framework. 
Moreover, beyond just GMMs, exploration of other mixture models that can benefit from manifold optimization techniques is a fruitful topic worth exploring.

\bibliographystyle{abbrvnat}

\begin{thebibliography}{34}
\providecommand{\natexlab}[1]{#1}
\providecommand{\url}[1]{\texttt{#1}}
\expandafter\ifx\csname urlstyle\endcsname\relax
  \providecommand{\doi}[1]{doi: #1}\else
  \providecommand{\doi}{doi: \begingroup \urlstyle{rm}\Url}\fi

\bibitem[Absil et~al.(2009)Absil, Mahony, and Sepulchre]{absil2009optimization}
P.-A. Absil, R.~Mahony, and R.~Sepulchre.
\newblock \emph{Optimization algorithms on matrix manifolds}.
\newblock Princeton University Press, 2009.

\bibitem[Arthur and Vassilvitskii(2007)]{arthur2007}
D.~Arthur and S.~Vassilvitskii.
\newblock k-means++: The advantages of careful seeding.
\newblock In \emph{Proceedings of the eighteenth annual ACM-SIAM symposium on
  Discrete algorithms (SODA)}, pages 1027--1035, 2007.

\bibitem[Balakrishnan et~al.(2014)Balakrishnan, Wainwright, and
  Yu]{balakrishnan2014statistical}
S.~Balakrishnan, M.~J. Wainwright, and B.~Yu.
\newblock {Statistical guarantees for the EM algorithm: From population to
  sample-based analysis}.
\newblock \emph{arXiv preprint arXiv:1408.2156}, 2014.

\bibitem[Bhatia(2007)]{bhatia07}
R.~Bhatia.
\newblock \emph{Positive Definite Matrices}.
\newblock Princeton University Press, 2007.

\bibitem[Bishop(2007)]{bishop}
C.~M. Bishop.
\newblock \emph{Pattern recognition and machine learning}.
\newblock Springer, 2007.

\bibitem[Bonnabel(2013)]{bonnabel2013}
S.~Bonnabel.
\newblock Stochastic gradient descent on riemannian manifolds.
\newblock \emph{Automatic Control, IEEE Transactions on}, 58\penalty0
  (9):\penalty0 2217--2229, 2013.

\bibitem[Boumal et~al.(2014)Boumal, Mishra, Absil, and
  Sepulchre]{boumal2014manopt}
N.~Boumal, B.~Mishra, P.-A. Absil, and R.~Sepulchre.
\newblock Manopt, a matlab toolbox for optimization on manifolds.
\newblock \emph{The Journal of Machine Learning Research}, 15\penalty0
  (1):\penalty0 1455--1459, 2014.

\bibitem[Burer et~al.(1999)Burer, Monteiro, and Zhang]{burer1999solving}
S.~Burer, R.~D. Monteiro, and Y.~Zhang.
\newblock Solving semidefinite programs via nonlinear programming. part i:
  Transformations and derivatives.
\newblock Technical Report TR99-17, Department of Computational and Applied
  Mathematics, Rice University, Houston TX, 1999.

\bibitem[Dasgupta(1999)]{dasgupta1999learning}
S.~Dasgupta.
\newblock Learning mixtures of gaussians.
\newblock In \emph{Foundations of Computer Science, 1999. 40th Annual Symposium
  on}, pages 634--644. IEEE, 1999.

\bibitem[Dempster et~al.(1977)Dempster, Laird, and Rubin]{dempster77}
A.~P. Dempster, N.~M. Laird, and D.~B. Rubin.
\newblock Maximum likelihood from incomplete data via the {EM} algorithm.
\newblock \emph{Journal of the Royal Statistical Society, Series B},
  39:\penalty0 1--38, 1977.

\bibitem[Duda et~al.(2000)Duda, Hart, and Stork]{dudahart}
R.~O. Duda, P.~E. Hart, and D.~G. Stork.
\newblock \emph{Pattern Classification}.
\newblock John Wiley \& Sons, 2nd edition, 2000.

\bibitem[Ge et~al.(2015)Ge, Huang, and Kakade]{kakade15}
R.~Ge, Q.~Huang, and S.~M. Kakade.
\newblock {Learning Mixtures of Gaussians in High Dimensions}.
\newblock \emph{arXiv:1503.00424}, 2015.

\bibitem[Jordan and Jacobs(1994)]{jordan1994hierarchical}
M.~I. Jordan and R.~A. Jacobs.
\newblock Hierarchical mixtures of experts and the em algorithm.
\newblock \emph{Neural computation}, 6\penalty0 (2):\penalty0 181--214, 1994.

\bibitem[Journ{\'e}e et~al.(2010)Journ{\'e}e, Bach, Absil, and
  Sepulchre]{journee2010low}
M.~Journ{\'e}e, F.~Bach, P.-A. Absil, and R.~Sepulchre.
\newblock Low-rank optimization on the cone of positive semidefinite matrices.
\newblock \emph{SIAM Journal on Optimization}, 20\penalty0 (5):\penalty0
  2327--2351, 2010.

\bibitem[Keener(2010)]{keener}
R.~W. Keener.
\newblock \emph{{Theoretical Statistics}}.
\newblock Springer Texts in Statistics. Springer, 2010.

\bibitem[Lee(2012)]{lee12}
J.~M. Lee.
\newblock \emph{Introduction to Smooth Manifolds}.
\newblock Number 218 in GTM. Springer, 2012.

\bibitem[Ma et~al.(2000)Ma, Xu, and Jordan]{ma2000asymptotic}
J.~Ma, L.~Xu, and M.~I. Jordan.
\newblock Asymptotic convergence rate of the em algorithm for gaussian
  mixtures.
\newblock \emph{Neural Computation}, 12\penalty0 (12):\penalty0 2881--2907,
  2000.

\bibitem[McLachlan and Peel(2000)]{McLPee00}
G.~J. McLachlan and D.~Peel.
\newblock \emph{Finite mixture models}.
\newblock John Wiley and Sons, New Jersey, 2000.

\bibitem[Moitra and Valiant(2010)]{moitra2010}
A.~Moitra and G.~Valiant.
\newblock Settling the polynomial learnability of mixtures of gaussians.
\newblock In \emph{Foundations of Computer Science (FOCS), 2010 51st Annual
  IEEE Symposium on}, pages 93--102. IEEE, 2010.

\bibitem[Murphy(2012)]{murphy12}
K.~P. Murphy.
\newblock \emph{Machine Learning: A Probabilistic Perspective}.
\newblock MIT Press, 2012.

\bibitem[Naim and Gildea(2012)]{naim2012convergence}
I.~Naim and D.~Gildea.
\newblock Convergence of the {EM} algorithm for gaussian mixtures with
  unbalanced mixing coefficients.
\newblock In \emph{Proceedings of the 29th International Conference on Machine
  Learning (ICML-12)}, pages 1655--1662, 2012.

\bibitem[Nocedal and Wright(2006)]{nocedal2006numerical}
J.~Nocedal and S.~J. Wright.
\newblock \emph{Numerical Optimization}.
\newblock Springer, 2006.

\bibitem[Redner and Walker(1984)]{redWal84}
R.~A. Redner and H.~F. Walker.
\newblock Mixture densities, maximum likelihood, and the {EM} algorithm.
\newblock \emph{Siam Review}, 26:\penalty0 195--239, 1984.

\bibitem[Ring and Wirth(2012)]{ring2012optimization}
W.~Ring and B.~Wirth.
\newblock Optimization methods on riemannian manifolds and their application to
  shape space.
\newblock \emph{SIAM Journal on Optimization}, 22\penalty0 (2):\penalty0
  596--627, 2012.

\bibitem[Salakhutdinov et~al.(2003)Salakhutdinov, Roweis, and
  Ghahramani]{salakhutdinov2003optimization}
R.~Salakhutdinov, S.~T. Roweis, and Z.~Ghahramani.
\newblock {Optimization with EM and Expectation-Conjugate-Gradient}.
\newblock In \emph{Proceedings of the 20th International Conference on Machine
  Learning (ICML-03)}, pages 672--679, 2003.

\bibitem[Sra and Hosseini(2013)]{sra2013geometric}
S.~Sra and R.~Hosseini.
\newblock Geometric optimisation on positive definite matrices for elliptically
  contoured distributions.
\newblock In \emph{Advances in Neural Information Processing Systems}, pages
  2562--2570, 2013.

\bibitem[Sra and Hosseini(2015)]{sra15}
S.~Sra and R.~Hosseini.
\newblock {Conic Geometric Optimization on the Manifold of Positive Definite
  Matrices}.
\newblock \emph{SIAM Journal on Optimization}, 25\penalty0 (1):\penalty0
  713--739, 2015.

\bibitem[Udri\c{s}te(1994)]{udriste}
C.~Udri\c{s}te.
\newblock \emph{Convex functions and optimization methods on {R}iemannian
  manifolds}.
\newblock Kluwer Academic, 1994.

\bibitem[Vanderbei and Benson(2000)]{vanderbei2000formulating}
R.~J. Vanderbei and H.~Y. Benson.
\newblock On formulating semidefinite programming problems as smooth convex
  nonlinear optimization problems.
\newblock Technical report, 2000.

\bibitem[Vandereycken(2013)]{vandereycken2013low}
B.~Vandereycken.
\newblock Low-rank matrix completion by riemannian optimization.
\newblock \emph{SIAM Journal on Optimization}, 23\penalty0 (2):\penalty0
  1214--1236, 2013.

\bibitem[Verbeek et~al.(2003)Verbeek, Vlassis, and
  Kr{\"o}se]{verbeek2003efficient}
J.~J. Verbeek, N.~Vlassis, and B.~Kr{\"o}se.
\newblock Efficient greedy learning of gaussian mixture models.
\newblock \emph{Neural computation}, 15\penalty0 (2):\penalty0 469--485, 2003.

\bibitem[Wiesel(2012)]{wiesel12}
A.~Wiesel.
\newblock Geodesic convexity and covariance estimation.
\newblock \emph{{IEEE} Transactions on Signal Processing}, 60\penalty0
  (12):\penalty0 6182--89, 2012.

\bibitem[Xu and Jordan(1996)]{jordan96}
L.~Xu and M.~I. Jordan.
\newblock On convergence properties of the {EM} algorithm for {Gaussian}
  mixtures.
\newblock \emph{Neural Computation}, 8:\penalty0 129--151, 1996.

\bibitem[Zoran and Weiss(2012)]{zoran2012natural}
D.~Zoran and Y.~Weiss.
\newblock Natural images, gaussian mixtures and dead leaves.
\newblock In \emph{Advances in Neural Information Processing Systems}, pages
  1736--1744, 2012.

\end{thebibliography}

\begin{appendices}

\newpage
\section{Technical details}
\label{sec:math}

\subsection{Proof of Proposition~\ref{prop:gc}}
First, we need the following lemma.
\begin{lemma}
  \label{lem:gm}
  Let $\ms$, $\mr \succ 0$. Then, for a vector $\vx$ of appropriate dimension,
  \begin{equation}
    \label{eq:1}
    \vx^T(\ms^{1/2}(\ms^{-1/2}\mr\ms^{-1/2})^{1/2}\ms^{1/2})\vx \le [\vx^T\ms\vx]^{1/2}[\vx^T\mr\vx]^{1/2}.
  \end{equation}
\end{lemma}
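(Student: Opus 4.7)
The plan is to reduce this inequality, which involves the operator geometric mean $\ms \gm \mr := \ms^{1/2}(\ms^{-1/2}\mr\ms^{-1/2})^{1/2}\ms^{1/2}$, to a plain Cauchy–Schwarz inequality in a transformed coordinate. The substitution that trivializes the problem is $\vy := \ms^{1/2}\vx$ together with the abbreviation $\mq := \ms^{-1/2}\mr\ms^{-1/2}$, which is positive definite and therefore has a well-defined positive definite square root $\mq^{1/2}$.

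First I would rewrite each of the three quadratic forms appearing in~\eqref{eq:1} in terms of $\vy$ and $\mq$: the right-hand side factors are simply $\vx^T\ms\vx = \vy^T\vy = \|\vy\|^2$ and $\vx^T\mr\vx = \vx^T\ms^{1/2}\mq\ms^{1/2}\vx = \vy^T\mq\vy = \|\mq^{1/2}\vy\|^2$, while the left-hand side is $\vx^T(\ms\gm\mr)\vx = \vy^T\mq^{1/2}\vy = \langle \vy,\,\mq^{1/2}\vy\rangle$. Substituting these expressions reduces the claimed inequality to
\begin{equation*}
\langle \vy,\,\mq^{1/2}\vy\rangle \;\le\; \|\vy\|\,\cdot\,\|\mq^{1/2}\vy\|,
\end{equation*}
which is exactly the Cauchy–Schwarz inequality applied to the vectors $\vy$ and $\mq^{1/2}\vy$ in the standard Euclidean inner product. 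That finishes the proof.

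There is no real obstacle here beyond carefully checking that the three algebraic rewritings are correct; the only subtle point is noting that $\mq^{1/2}$ is self-adjoint so that $\vy^T\mq^{1/2}\mq^{1/2}\vy = \|\mq^{1/2}\vy\|^2$, which is what justifies identifying $\vy^T\mq\vy$ with the squared norm on the right of Cauchy–Schwarz. Everything else is bookkeeping.
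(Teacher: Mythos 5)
Your proof is correct: the substitution $\vy=\ms^{1/2}\vx$, $\mq=\ms^{-1/2}\mr\ms^{-1/2}$ does turn the three quadratic forms into $\vy^T\mq^{1/2}\vy$, $\|\vy\|^2$, and $\|\mq^{1/2}\vy\|^2$ respectively, and the claim then is exactly Cauchy--Schwarz for the pair $\vy$, $\mq^{1/2}\vy$. The paper takes a different (and much terser) route: it simply cites Theorem~4.1.3 of Bhatia's \emph{Positive Definite Matrices}, which characterizes the matrix geometric mean $\ms\gm\mr=\ms^{1/2}(\ms^{-1/2}\mr\ms^{-1/2})^{1/2}\ms^{1/2}$ --- for instance via the extremal property that $\bigl(\begin{smallmatrix}\ms & \mx\\ \mx & \mr\end{smallmatrix}\bigr)\succeq 0$ for $\mx=\ms\gm\mr$, from which the scalar inequality follows by testing the block form on $(t\vx,-t^{-1}\vx)$ and optimizing over $t$. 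Your argument buys a fully self-contained, elementary two-line verification that needs nothing beyond the spectral theorem and Cauchy--Schwarz, while the paper's citation buys brevity and situates the inequality within the standard theory of the operator geometric mean; the two are essentially equivalent in content, since your change of variables is precisely the congruence trick underlying the cited theorem.
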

\begin{proof}
  Follows from \citep[Thm.~4.1.3]{bhatia07}.
\end{proof}

\begin{proof}{Proof (Prop.~\ref{prop:gc})}
  Since $\phi$ is continuous, it suffices to establish mid-point geodesic convexity:
  \begin{equation*}
    \phi(\gamma_{\ms,\mr}(\half)) 
    \le \half\phi(\ms) + \half\phi(\mr),\qquad\text{for}\ \ms,\mr \in \pp^d.
  \end{equation*}
  Denoting inessential constants by $c$, the above inequality turns into 
  \begin{align*}
    \phi(\gamma_{\ms,\mr}(\half)) 
    &=
    \phi(\ms^{1/2}(\ms^{-1/2}\mr\ms^{-1/2})^{1/2}\mr^{1/2})\\
    &= -\log\det(\ms^{1/2}\mr^{1/2}) + c\nlsum_i\vy_i^T(\ms^{1/2}(\ms^{-1/2}\mr\ms^{-1/2})^{1/2}\ms^{1/2})\vy_i\\
    &\le -\half\log\det(\ms) -\half\log\det(\mr) + c\nlsum_i [\vy_i^T\ms\vy_i]^{1/2}[\vy_i^TR\vy_i]^{1/2}\\
    &\le -\half\log\det(\ms) + \half c\nlsum_i \vy_i^T\ms\vy_i -\half\log\det(\mr) + \half c\nlsum_i \vy_i^T\mr\vy_i\\
    &=\half \phi(\ms) + \half \phi(\mr),
  \end{align*}
  where the first inequality follows from Lemma~\ref{lem:gm}.
\end{proof}

\subsection{Proof of Theorem~\ref{thm:gmm.reparam}}
\begin{proof}
  Let $\ms_1^*, \ldots, \ms_K^*$ be a local maximum of $\widehat{\Lc}$. Then, $\ms_j^*$ is the maximum of the following cost function:
  \begin{equation*}
    \frac12 \nlsum_{i=1}^n w_i \log\det(\ms_j) + \frac12\nlsum_{i=1}^n w_i \vy_i^T \ms_j^{-1} \vy_i,
  \end{equation*} 
  where for each $i \in \set{1,\ldots,n}$ the weight
  \begin{equation*}
    w_i = \frac{q_{\Nc}(\vy_i|\ms^*_j)}{\sum_{j=1}^K\alpha_j q_{\Nc}(\vy_i|\ms^*_j) }.
  \end{equation*}
  Using an argument similar to that for Theorem~\ref{thm.gauss}, we see that $s_j^*=1$, whereby $q_{\Nc}(\vy_i|\ms^*_j) = p_{\Nc}(\vx_i;\vt_j^*,\mU_j^*)$. Thus, at a maximum the distributions agree and the proof is complete.
\end{proof}
\section{Line-search Algorithm}
Algorithm~\ref{alg.wls} summarizes a line-search algorithm satisfying strong Wolfe conditions. The zooming phase of the line-search is given in Algorithm~\ref{alg.zooming}. Like in the Euclidean case $c_1$ is assumed to be a small number, here $10^{-4}$, and $c_2$ is a constant close to one, here $0.9$. For interpolation and extrapolation one can find the minimum of a cubic polynomial approximation to the function in an interval. In each step of interpolation, the interpolation is done on an interval smaller that the actual interval  to have specific distance from end-points of the interval (we used the distance to be 0.1 of the interval length). The interval for the extrapolation is assumed to be between $1.1$ and $10$ times larger than the point we are extrapolating from. For the cubic polynomial interpolation, we use the function $\phi(.)$ and its gradient $\phi^{\prime}(.)$ in the interval. For extrapolation, we use the function and gradient at $0$ and at the end-point.

\begin{minipage}[h]{1.0\linewidth}
  \begin{minipage}[t]{0.5\linewidth}
    \begin{algorithm}[H]
      \caption{\small Line-search satisfying Wolfe conditions }
      \label{alg.wls}
      \begin{algorithmic}[1]\raggedright\small
        \State{\bf Given:} Current point $X_k$ and descent direction $\xi_k$
        \State $\phi(\alpha) \gets f(R_{X_k}(\alpha \xi_k))$;  $\phi^{\prime}(\alpha) \gets \alpha D f(X_k) \xi_k$
        \State $\alpha_0 \gets 0$, $\alpha_1 > 0$ and $i\gets 0$.
        \While{$i \leq i_{\max}$}
          \State $i \leftarrow i+1$
          \If { $\phi(\alpha_i) > \phi(0) + c_1 \alpha_i \phi^{\prime}(0)$ \textrm{\bf or}  $\phi(\alpha_i) \geq \phi(\alpha_{i-1}),\ i>1$} 
            \State $\alpha_{\text{low}}=\alpha_{i-1}$ and $\alpha_{\text{hi}}=\alpha_{i}$ 
            \State {\bf break}
          \ElsIf { $|\phi^{\prime}(\alpha_i)| \leq c_2 \phi^{\prime}(0)$} 
            \Return $\alpha_i$
          \ElsIf { $|\phi^{\prime}(\alpha_i)| \geq 0$} 
            \State $\alpha_{\text{low}}=\alpha_{i}$ and $\alpha_{\text{hi}}=\alpha_{i-1}$
            \State {\bf break}
          \Else
            \State Using extrapolation find $\alpha_{i+1} > \alpha_i$
          \EndIf
        \EndWhile
        \State \textbf{Call} \textsc{ZoomingPhase}
      \end{algorithmic}
    \end{algorithm}
  \end{minipage}
  \begin{minipage}[t]{0.5\linewidth}
    \begin{algorithm}[H]
    \label{alg.zooming}
      \caption{\small \textsc{ZoomingPhase}}
      \begin{algorithmic}[1]\raggedright\small
        \While{$i \leq i_{\max}$} 
        \State $i \leftarrow i+1$
        \State Interpolate to find $\alpha_i \in (\alpha_{\text{low}}, \alpha_{\text{hi}})$
        \If { $\phi(\alpha_i) > \phi(0) + c_1 \alpha_i \phi^{\prime}(0)$ \textrm{\bf or} $\phi(\alpha_i) \geq \phi(\alpha_{\text{low}})$} 
        \State $\alpha_{\text{hi}} \leftarrow \alpha_i$
        \Else
        \If { $|\phi^{\prime}(\alpha_i)| \leq c_2 \phi^{\prime}(0)$} 
        \Return $\alpha_i$
        \ElsIf { $\phi^{\prime}(\alpha_i)(\alpha_{\text{hi}}-\alpha_{\text{low}}) \geq 0 $} 
        \State $\alpha_{\text{hi}} \leftarrow \alpha_{\text{low}}$
        \EndIf
        \State $\alpha_{\text{low}} \leftarrow \alpha_i$
        \EndIf
        \EndWhile 

        \State \textbf{return} {\bf failure}
      \end{algorithmic}
    \end{algorithm}
  \end{minipage}
\end{minipage}

\section{Figure showing the effect of separation parameter}
A typical 2D data with $K=5$ created for different separation is shown in Figure~\ref{fig:scatter}.
\begin{figure*}[htbp]
\begin{center}
	
	\subfigure[low separation]{%
		\label{fig:scatter-low}%
		\includegraphics[width=.32\textwidth]{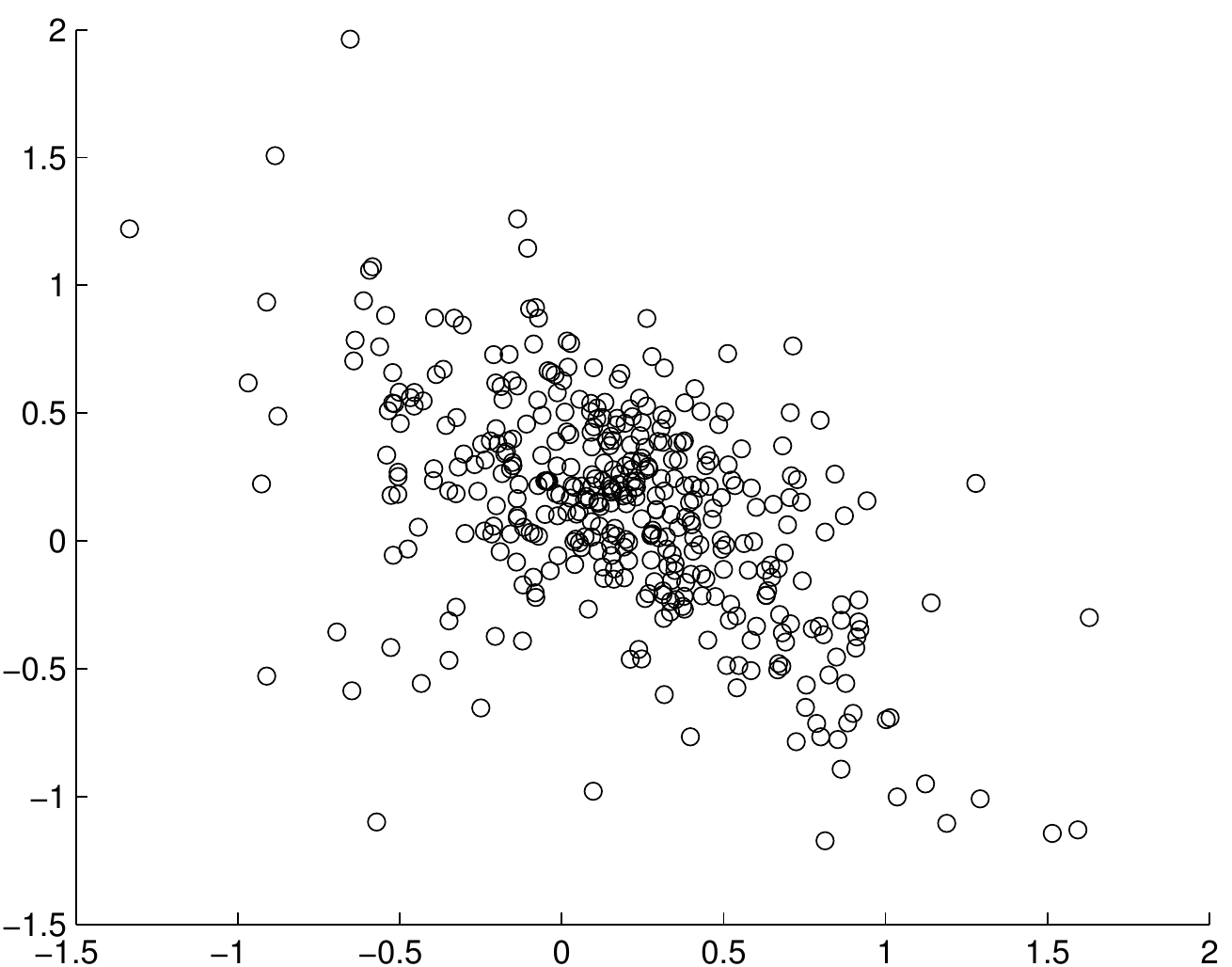}}%
	\hfill%
	\subfigure[medium separation]{%
		\label{fig:scatter-mid}%
		\includegraphics[width=.32\textwidth]{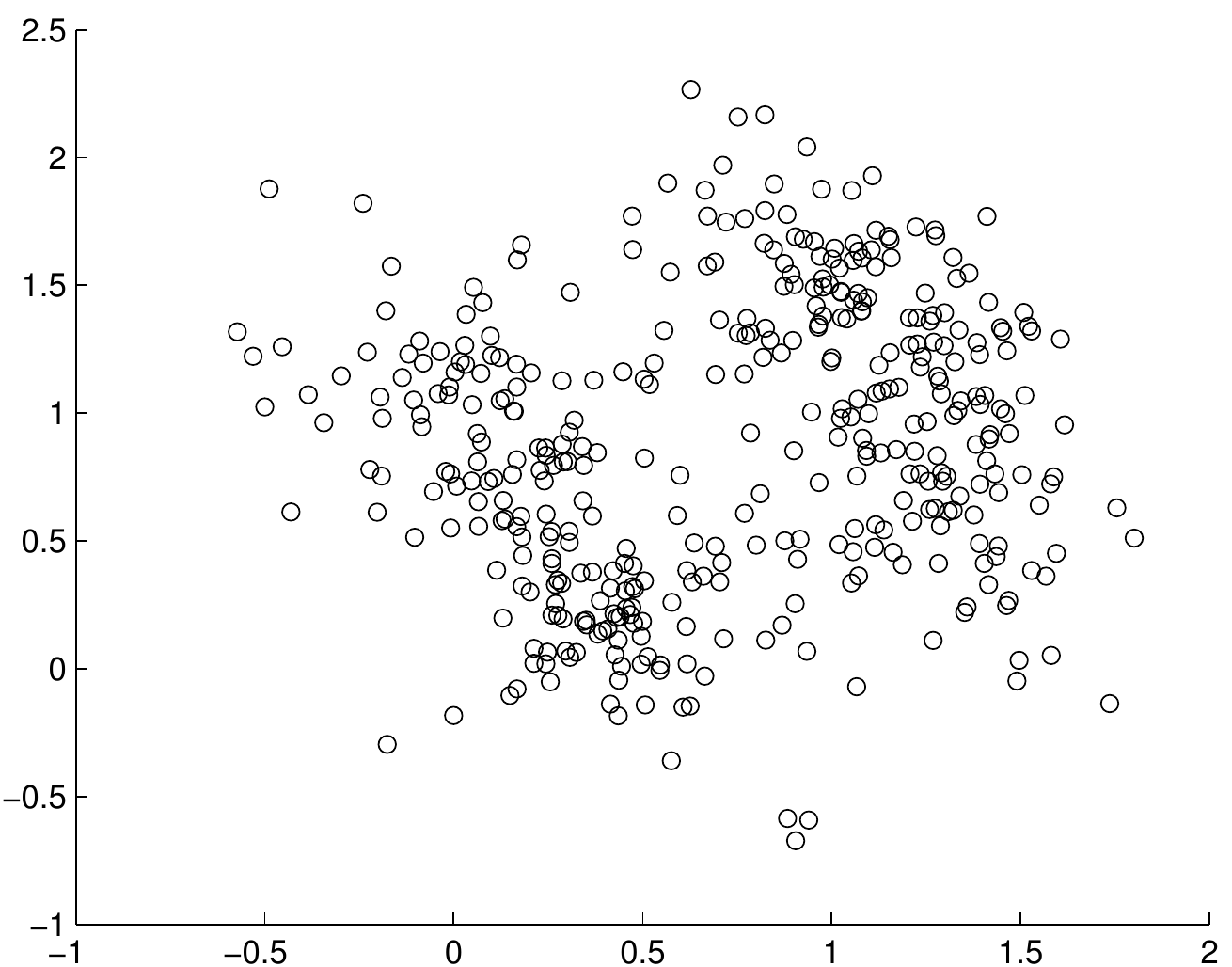}}%
	\hfill%
	\subfigure[high separation]{%
		\label{fig:scatter-high}%
		\includegraphics[width=.32\textwidth]{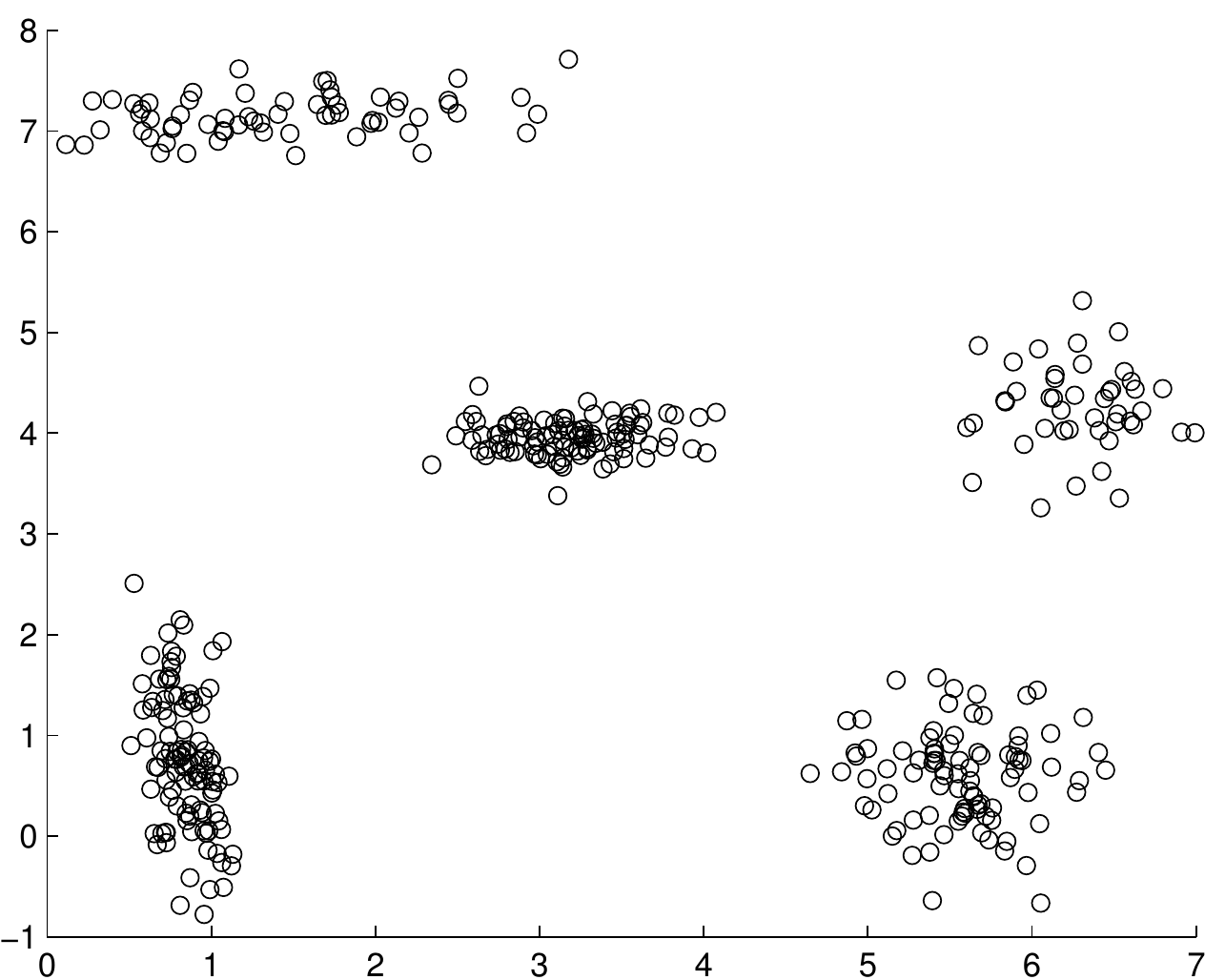}}%
	
	\caption{scatter data cloud for different degrees of separations.}
	\label{fig:scatter}
\end{center}
\end{figure*}

\section{Supplementary Simulated Experimental Results}
For the lower-dimensional cases and when the number of data is small, EM algorithm shows better performance than LBFGS optimization and pretty similar performance like CG. This is mainly because of the computational overhead like retraction and parallel transport that is needed to be computed for them. We believe that a more careful implementation will change the picture specially for the case of LBFGS. Because for performing parallel transport between $\Sigma_1$ and $\Sigma_2$, one can store the matrix $(\Sigma_2\Sigma_1^{-1})^{1/2}$ and for performing the inner product at point $\Sigma$, it is possible to store the inverse of the matrix $\Sigma$; by storing these matrices the only computation remained is matrix product. 

We reported the result for $d=20$ in the main text and because of the lack of space, we are reporting the result for usual CG below in Table~\ref{tbl:expd20usualCG}. The results for low-dimensional cases $d=2$ and $d=5$ and for pretty small number of data-points $n=d^2\times100$ are shown in tables~\ref{tbl:expd2e10}-\ref{tbl:expd5usualCG}.
\subsection{Results for $d=20$}
\begin{table}[H]
  \caption{Speed and ALL for Usual CG and with $d=20$.}
  \label{tbl:expd20usualCG}
  \rowcolors{2}{gray!20}{white}
  \begin{tabular}{lr|rr|rr}
    \hline
    & & \multicolumn{2}{c|}{$e=1$}   & \multicolumn{2}{c}{$e=10$} \\
    &   & Time (s) & ALL       & Time (s) & ALL\\
    \hline
    $c=0.2$ & $K=2$ & 57.2 $\pm$ 49.7 & 17.6 & 26.4 $\pm$ 29.2 & -11.3 \\
& $K=5$ & 225.3 $\pm$ 74.3 & 17.5 & 216.7 $\pm$ 105.3 & -12.9 \\[5pt]
    $c=1$ & $K=2$ & 43.1 $\pm$ 22.6 & 17.0 & 27.5 $\pm$ 15.8 & -10.8 \\
& $K=5$ & 191.1 $\pm$ 86.8 & 16.2 & 140.4 $\pm$ 39.7 & -13.4\\[5pt]
    $c=5$ & $K=2$ & 18.4 $\pm$ 8.9 & 17.1 & 36.4 $\pm$ 20.5 & -11.2 \\
& $K=5$ & 97.8 $\pm$ 48.1 & 16.1 & 167.4 $\pm$ 90.5 & -12.8 
  \end{tabular}
\end{table}
\subsection{Results for $d=2$}
\begin{table}[htb]
  \caption{Speed and log-likelihood comparisons for $d=2$ and $e=10$}
  \label{tbl:expd2e10}
  \rowcolors{2}{gray!20}{white}
  \begin{tabular}{lr|rr|rr|rr}
    \hline
    & & \multicolumn{2}{c|}{EM Algorithm}   & \multicolumn{2}{c|}{LBFGS Reparametrized} & \multicolumn{2}{c}{CG Reparametrized} \\
    &   & Time & ALL       & Time & ALL& Time & ALL \\
    \hline
    $c=0.2$ & $K=2$ & 0.4 $\pm$ 0.4 & 0.6 & 1.7 $\pm$ 1.0 & 0.6 & 0.6 $\pm$ 0.4 & 0.6 \\
& $K=5$ & 1.4 $\pm$ 1.0 & -0.6 & 7.3 $\pm$ 4.0 & -0.6 & 2.1 $\pm$ 2.3 & -0.6 \\[5pt]
    $c=1$ & $K=2$ & 0.4 $\pm$ 0.3 & 0.4 & 1.4 $\pm$ 0.7 & 0.4 & 0.4 $\pm$ 0.2 & 0.4 \\
& $K=5$ & 1.0 $\pm$ 1.0 & -1.3 & 4.6 $\pm$ 2.7 & -1.3 & 1.2 $\pm$ 0.8 & -1.3 \\[5pt]
    $c=5$ & $K=2$ & 0.0 $\pm$ 0.0 & 0.2 & 0.1 $\pm$ 0.1 & 0.2 & 0.1 $\pm$ 0.0 & 0.2 \\
& $K=5$ & 0.1 $\pm$ 0.1 & -2.0 & 2.0 $\pm$ 2.5 & -2.0 & 0.4 $\pm$ 0.4 & -2.0 
  \end{tabular}
\end{table}

\begin{table}[htb]
  \caption{Speed and log-likelihood comparisons for $d=2$ and $e=1$}
  \rowcolors{2}{gray!20}{white}
  \begin{tabular}{lr|rr|rr|rr}
    \hline
    & & \multicolumn{2}{c|}{EM Algorithm}   & \multicolumn{2}{c|}{LBFGS Reparametrized} & \multicolumn{2}{c}{CG Reparametrized} \\
    &   & Time & ALL       & Time & ALL& Time & ALL \\
    \hline
    $c=0.2$ & $K=2$ & 0.7 $\pm$ 0.6 & 1.8 & 1.4 $\pm$ 0.8 & 1.8 & 0.7 $\pm$ 0.4 & 1.8 \\
    & $K=5$ & 2.5 $\pm$ 1.8 & 1.8 & 5.5 $\pm$ 1.7 & 1.8 & 2.4 $\pm$ 0.9 & 1.8 \\[5pt]
    $c=1$ & $K=2$ & 0.7 $\pm$ 0.5 & 1.6 & 1.7 $\pm$ 0.9 & 1.6 & 0.8 $\pm$ 0.5 & 1.6 \\
    & $K=5$ & 2.1 $\pm$ 1.1 & 1.1 & 5.1 $\pm$ 1.8 & 1.1 & 2.8 $\pm$ 1.1 & 1.1 \\[5pt]
    $c=5$ & $K=2$ & 0.0 $\pm$ 0.1 & 1.1 & 0.3 $\pm$ 0.3 & 1.1 & 0.1 $\pm$ 0.1 & 1.1 \\
    & $K=5$ & 0.3 $\pm$ 0.4 & 0.2 & 1.8 $\pm$ 1.3 & 0.2 & 0.9 $\pm$ 0.6 & 0.2 
  \end{tabular}
\end{table}

\begin{table}[htb]
  \caption{Speed and ALL for Usual CG and with $d=2$.}
  \label{tbl:expd2usualCG}
  \rowcolors{2}{gray!20}{white}
  \begin{tabular}{lr|rr|rr}
    \hline
    & & \multicolumn{2}{c|}{$e=1$}   & \multicolumn{2}{c}{$e=10$} \\
    &   & Time (s) & ALL       & Time (s) & ALL\\
    \hline
    $c=0.2$ & $K=2$ & 1.0 $\pm$ 0.5 & 1.8 & 0.9 $\pm$ 0.4 & 0.6 \\
& $K=5$ & 5.3 $\pm$ 2.0 & 1.8 & 6.7 $\pm$ 3.9 & -0.6 \\[5pt]
    $c=1$ & $K=2$ & 1.0 $\pm$ 0.4 & 1.6 & 0.8 $\pm$ 0.4 & 0.4 \\
& $K=5$ & 7.8 $\pm$ 4.8 & 1.1 & 3.9 $\pm$ 1.8 & -1.3\\[5pt]
    $c=5$ & $K=2$ & 0.7 $\pm$ 0.7 & 1.1 & 0.3 $\pm$ 0.1 & 0.2 \\
& $K=5$ & 4.8 $\pm$ 5.2 & 0.2 & 3.2 $\pm$ 2.3 & -2.0 
  \end{tabular}
\end{table}

\clearpage
\subsection{Results for $d=5$}

\begin{table}[htb]
  \caption{Speed and log-likelihood comparisons for $d=5$ and $e=10$}
  \rowcolors{2}{gray!20}{white}
  \begin{tabular}{lr|rr|rr|rr}
    \hline
    & & \multicolumn{2}{c|}{EM Algorithm}   & \multicolumn{2}{c|}{LBFGS Reparametrized} & \multicolumn{2}{c}{CG Reparametrized} \\
    &   & Time & ALL       & Time & ALL& Time & ALL \\
    \hline
    $c=0.2$ & $K=2$ & 0.1 $\pm$ 0.0 & -1.4 & 0.8 $\pm$ 0.6 & -1.4 & 0.2 $\pm$ 0.1 & -1.4 \\
& $K=5$ & 3.1 $\pm$ 2.7 & -3.1 & 14.0 $\pm$ 12.0 & -3.1 & 3.6 $\pm$ 2.0 & -3.1 \\[5pt]
    $c=1$ & $K=2$ & 0.1 $\pm$ 0.0 & -0.7 & 0.7 $\pm$ 1.0 & -0.7 & 0.2 $\pm$ 0.2 & -0.7 \\
& $K=5$ & 0.7 $\pm$ 0.5 & -3.5 & 7.0 $\pm$ 4.8 & -3.5 & 1.7 $\pm$ 1.1 & -3.5 \\[5pt]
    $c=5$ & $K=2$ & 0.0 $\pm$ 0.0 & -1.1 & 0.2 $\pm$ 0.2 & -1.1 & 0.1 $\pm$ 0.1 & -1.1 \\
& $K=5$ & 0.9 $\pm$ 1.1 & -3.7 & 4.9 $\pm$ 5.1 & -3.7 & 1.4 $\pm$ 1.2 & -3.7 \\
  \end{tabular}
\end{table}


\begin{table}[htb]
  \caption{Speed and log-likelihood comparisons for $d=5$ and $e=1$}
  \rowcolors{2}{gray!20}{white}
  \begin{tabular}{lr|rr|rr|rr}
    \hline
    & & \multicolumn{2}{c|}{EM Algorithm}   & \multicolumn{2}{c|}{LBFGS Reparametrized} & \multicolumn{2}{c}{CG Reparametrized} \\
    &   & time & ALL       & time & ALL& time & ALL \\
    \hline
    $c=0.2$ & $K=2$ & 1.9 $\pm$ 2.0 & 4.4 & 3.6 $\pm$ 1.5 & 4.4 & 1.8 $\pm$ 1.1 & 4.4 \\
    & $K=5$ & 4.3 $\pm$ 1.7 & 4.4 & 13.9 $\pm$ 4.8 & 4.4 & 6.7 $\pm$ 2.4 & 4.4 \\[5pt]
    $c=1$ & $K=2$ & 1.3 $\pm$ 1.2 & 4.1 & 2.1 $\pm$ 1.2 & 4.0 & 1.1 $\pm$ 0.7 & 4.1 \\
    & $K=5$ & 3.3 $\pm$ 2.2 & 3.5 & 9.5 $\pm$ 7.3 & 3.5 & 5.2 $\pm$ 2.1 & 3.5 \\[5pt]
    $c=5$ & $K=2$ & 0.0 $\pm$ 0.0 & 3.8 & 0.2 $\pm$ 0.2 & 3.8 & 0.2 $\pm$ 0.1 & 3.8 \\
    & $K=5$ & 0.7 $\pm$ 1.5 & 2.8 & 3.3 $\pm$ 3.5 & 2.8 & 1.7 $\pm$ 1.9 & 2.8
  \end{tabular}
\end{table}

\begin{table}[htb]
  \caption{Speed and ALL for Usual CG and with $d=5$.}
  \label{tbl:expd5usualCG}
  \rowcolors{2}{gray!20}{white}
  \begin{tabular}{lr|rr|rr}
    \hline
    & & \multicolumn{2}{c|}{$e=1$}   & \multicolumn{2}{c}{$e=10$} \\
    &   & Time (s) & ALL       & Time (s) & ALL\\
    \hline
    $c=0.2$ & $K=2$ &1.9 $\pm$ 0.8 & 4.4 & 0.9 $\pm$ 0.2 & -1.4 \\
& $K=5$ & 12.2 $\pm$ 5.5 & 4.4& 10.6 $\pm$ 6.7 & -3.1 \\[5pt]
$c=1$ & $K=2$ & 3.2 $\pm$ 2.1 & 4.0 & 1.3 $\pm$ 0.7 & -0.7 \\
& $K=5$ &11.2 $\pm$ 4.9 & 3.5 & 7.4 $\pm$ 3.0 & -3.5 \\[5pt]
$c=5$ & $K=2$ & 0.9 $\pm$ 0.4 & 3.8 & 0.7 $\pm$ 0.3 & -1.1 \\
& $K=5$ & 5.8 $\pm$ 3.5 & 2.8 & 6.1 $\pm$ 6.0 & -3.7
  \end{tabular}
\end{table}

\section{Supplementary experimental results on some real datasets}
We selected some datasets  from UCI machine learning dataset repository\footnote{https://archive.ics.uci.edu/ml/datasets} and report the results for all of those we selected to perform the test on. As it can be seen from performance evaluations for these real datasets and also other simulated and real datasets in the main text, a systematic behavior for different optimization procedures can be observed. Namely by increasing the number of components, overlap increases leading to inferior performance of EM in compare to manifold optimization methods. For two of dataset, we normalized the features to have equal variance due to high variability of feature variances.
\subsection{Results for MAGIC gamma telescope}
In the case of MAGIC telescope dataset, the reparametrization proves to be extremely important, such that the stopping criterion of small cost difference is triggered without the algorithm being actually converged. 
\begin{table}[htb]
  \caption{Speed and ALL comparisons  for MAGIC gamma telescope data $d=10$, $n=19020$}
  \label{tbl:expmagic}
  \begin{tabular}{lrr|rr|rr|rr}
    \hline
    & \multicolumn{2}{c|}{EM Algorithm}   & \multicolumn{2}{c|}{LBFGS Reparam} & \multicolumn{2}{c|}{CG Reparam} & \multicolumn{2}{c}{CG Usual} \\
    & Time (s) & ALL       & Time (s) & ALL& Time (s) & ALL & Time (s) & ALL \\
    \hline
$K=2$ & 0.28 & -28.44 & 1.08 & -28.44 & 0.54 & -28.44 & 33.57 & -29.47 \\
$K=3$ & 1.10 & -27.60 & 4.12 & -27.56 & 2.74 & -27.56 & 127.98 & -29.14 \\
$K=4$ & 3.59 & -27.29 & 4.07 & -27.29 & 2.14 & -27.29 & 125.78 & -28.62 \\
$K=5$ & 3.16 & -27.03 & 7.40 & -27.03 & 10.14 & -27.03 & 222.54 & -28.75 \\
$K=6$ & 10.16 & -26.90 & 9.89 & -26.92 & 8.86 & -26.92 & 304.88 & -28.09 \\
$K=7$ & 10.38 & -26.79 & 11.02 & -26.87 & 18.00 & -26.75 & 395.92 & -27.99 \\
$K=8$ & 9.01 & -26.64 & 14.97 & -26.63 & 16.04 & -26.64 & 448.02 & -27.62 \\
$K=9$ & 27.89 & -26.63 & 18.74 & -26.66 & 17.91 & -26.66 & 505.05 & -27.66 \\
$K=10$ & 16.16 & -26.47 & 18.21 & -26.49 & 22.23 & -26.49 & 552.52 & -27.70 \\
    \hline 
  \end{tabular}
\end{table}
\subsection{Results for (normalized) Corel image features}
\begin{table}[htb]
 \caption{Speed and ALL comparisons  for Corel image features data $d=57$, $n=68040$}
  \label{tbl:expcorel}
  \begin{tabular}{lrr|rr|rr}
    \hline
    & \multicolumn{2}{c|}{EM Algorithm}   & \multicolumn{2}{c|}{LBFGS Reparam} & \multicolumn{2}{c}{CG Reparam}  \\
    & Time (s) & ALL       & Time (s) & ALL& Time (s) & ALL  \\
    \hline
$K=2$ & 13.63 & -13.34 & 18.18 & -13.34 & 20.42 & -13.34 \\
$K=3$ & 133.59 & -4.78 & 164.52 & -4.78 & 114.07 & -4.79 \\
$K=4$ & 64.56 & 0.26 & 96.13 & 0.26 & 70.15 & 0.26 \\
$K=5$ & 178.76 & 3.22 & 110.39 & 3.22 & 91.87 & 3.20 \\
$K=6$ & 465.93 & 4.53 & 300.52 & 5.25 & 361.56 & 5.24 \\
$K=7$ & 646.85 & 7.02 & 347.00 & 7.03 & 712.65 & 6.85 \\
$K=8$ & 1124.44 & 8.62 & 442.05 & 8.59 & 557.63 & 8.49 \\
$K=9$ & 913.35 & 9.84 & 1163.63 & 10.09 & 981.04 & 9.80 \\
$K=10$ & 2213.15 & 10.81 & 592.88 & 10.79 &1456.38 & 10.79 \\
    \hline 
  \end{tabular}
\end{table}
\newpage
\subsection{Results for combined cycle power plant}
\begin{table}[htb]
 \caption{Speed and ALL comparisons  for power plant data $d=5$, $n=2568$}
  \label{tbl:expplant0}
  \begin{tabular}{lrr|rr|rr}
    \hline
    & \multicolumn{2}{c|}{EM Algorithm}   & \multicolumn{2}{c|}{LBFGS Reparam} & \multicolumn{2}{c}{CG Reparam}  \\
    & Time (s) & ALL       & Time (s) & ALL& Time (s) & ALL  \\
    \hline
$K=2$ & 0.14 & -16.09 & 0.31 & -16.09 & 0.21 & -16.09 \\
$K=3$ & 1.41 & -15.99 & 3.99 & -15.98 & 1.82 & -15.98 \\
$K=4$ & 1.94 & -15.91 & 4.56 & -15.91 & 1.99 & -15.91 \\
$K=5$ & 2.50 & -15.87 & 3.40 & -15.88 & 2.13 & -15.88 \\
$K=6$ & 3.79 & -15.83 & 7.56 & -15.82 & 4.78 & -15.82 \\
$K=7$ & 9.18 & -15.81 & 7.39 & -15.80 & 3.58 & -15.80 \\
$K=8$ & 12.44 & -15.78 & 17.04 & -15.74 & 9.32 & -15.74 \\
$K=9$ & 11.41 & -15.76 & 17.39 & -15.76 & 36.41 & -15.76 \\
$K=10$ & 73.27 & -15.69 & 52.41 & -15.69 & 23.06 & -15.69 \\
    \hline 
  \end{tabular}
\end{table}

\subsection{Results for (normalized) YearPredictionMSD}
\begin{table}[htb]
 \caption{Speed and ALL comparisons  for YearPredictionMSD data $d=90$, $n=515345$}
  \label{tbl:expplant2}
  \begin{tabular}{lrr|rr|rr}
    \hline
    & \multicolumn{2}{c|}{EM Algorithm}   & \multicolumn{2}{c|}{LBFGS Reparam} & \multicolumn{2}{c}{CG Reparam}  \\
    & Time (s) & ALL       & Time (s) & ALL& Time (s) & ALL  \\
    \hline
$K=2$ & 248.14 & -86.67 & 224.73 & -86.67 & 196.11 & -86.67 \\
$K=3$ & 352.74 & -82.00 & 549.42 & -82.00 & 752.15 & -82.00 \\
$K=4$ & 816.22 & -79.79 & 1212.66 & -79.79 & 1832.93 & -79.79 \\
$K=5$ & 5152.93 & -78.13 & 5959.86 & -78.13 & 3061.53 & -80.02 \\
$K=6$ & 2921.52 & -76.96 & 1415.24 & -76.96 & 3084.32 & -76.96 \\
$K=7$ & 4717.05 & -76.09 & 4690.40 & -76.09 & 5813.55 & -76.09 \\
$K=8$ & 5528.35 & -75.32 & 3466.55 & -75.32 & 4518.16 & -75.32 \\
$K=9$ & 10729.09 & -74.76 & 5015.60 & -74.76 & 8703.81 & -74.76 \\
    \hline 
  \end{tabular}
\end{table}

\clearpage
\section{Pseucode for Riemannian LBFGS}
\begin{algorithm}[ht]
  \caption{\small L-RBFGS}
  \label{alg.lrbfgs}
  \begin{algorithmic}
    \State {\bf Given:} Riemannian manifold $\Mc$ with Riemannian metric $g$; parallel transport $\mathcal{T}$ on $\Mc$; geodesics $R$; initial value $X_0$; a smooth function $f$
    \State Set initial $H_{\rm diag}=1/\sqrt{g_{X_0}(\text{grad} f(X_0),\text{grad} f(X_0))}$ 
    \For{$k=0,1,\ldots$}
    \State Obtain descent direction $\xi_k$ by unrolling the RBFGS method\\
    \hskip16pt$\xi_k \gets \textsc{HessMul}(-\text{grad} f(X_k), k)$
    \State Use line-search to find $\alpha$ such that it satisfies Wolfe conditions
    \State Calculate $X_{k+1}=R_{X_k}(\alpha \xi_k)$
    \State Define $S_k=\mathcal{T}_{X_k,X_{k+1}}(\alpha \xi_k)$
    \State Define $Y_k=\text{grad} f(X_{k+1})-\mathcal{T}_{X_k,X_{k+1}}(\text{grad} f(X_k))$
    \State Update $H_{\text{diag}}=g_{X_{k+1}}(S_k,Y_k)/g_{X_{k+1}}(Y_k,Y_k)$
    \State Store $Y_k$; $S_k$; $g_{X_{k+1}}(S_k,Y_k)$; $g_{X_{k+1}}(S_k,S_k)/g_{X_{k+1}}(S_k,Y_k)$; $H_{\rm diag}$
    \EndFor
    \State \Return $X_k$
    \State \textbf{function} $\textsc{HessMul}(P, k)$
    \If{$k>0$}
      \State  $P_{k}=P-\frac{g_{X_{k+1}}(S_k,P_{k+1})}{g_{X_{k+1}}(Y_k,S_k)}Y_k$
      \State   $\hat{P}=\mathcal{T}_{X_{k+1},X_k}\textsc{HessMul}(\mathcal{T}_{X_k,X_{k+1}} P_{k},k-1)$
      \Return $\hat{P} - \frac{g_{X_{k+1}}(Y_k,\hat{P})}{g_{X_{k+1}}(Y_k,S_k)}S_k + \frac{g_{X_{k+1}}(S_k,S_k)}{g_{X_{k+1}}(Y_k,S_k)}P$
      \Else
      \State \Return $H_{\rm diag}P$
     \EndIf
    \State \textbf{end function} 
  \end{algorithmic}
\end{algorithm}

\end{appendices}

\end{document}